\newtheorem{theorem}{Theorem}
\newtheorem{lemma}{Lemma}
\newtheorem{remark}{Remark}
\newtheorem{definition}[theorem]{Definition}
\newtheorem{assumption}[theorem]{Assumption}
\title{Online Double Oracle}
\def\eqref#1{equation~\ref{#1}}
\def\1{\bm{1}}
\def\va{{\bm{a}}}
\def\vc{{\bm{c}}}
\def\vl{{\bm{l}}}
\def\vpi{{\boldsymbol{\pi}}}
\def\mA{{\bm{A}}}
\def\mG{{\bm{G}}}
\DeclareMathAlphabet{\mathsfit}{\encodingdefault}{\sfdefault}{m}{sl}
\SetMathAlphabet{\mathsfit}{bold}{\encodingdefault}{\sfdefault}{bx}{n}
\author{Le Cong Dinh$^{*,1,2}$, Yaodong Yang$^{*,1,4}$, Stephen McAleer$^{5}$, Nicolas Perez-Nieves$^{3}$,\And  Oliver Slumbers$^4$, Zheng Tian$^4$, David Henry Mguni$^1$, Haitham Bou Ammar$^1$,  Jun Wang$^{1,4}$}
\begin{document}

\maketitle

% It is OKAY to include author information, even for blind
% submissions: the style file will automatically remove it for you
% unless you've provided the [accepted] option to the icml2020
% package.

% List of affiliations: The first argument should be a (short)
% identifier you will use later to specify author affiliations
% Academic affiliations should list Department, University, City, Region, Country
% Industry affiliations should list Company, City, Region, Country

% You can specify symbols, otherwise they are numbered in order.
% Ideally, you should not use this facility. Affiliations will be numbered
% in order of appearance and this is the preferred way.

% You may provide any keywords that you
% find helpful for describing your paper; these are used to populate
% the "keywords" metadata in the PDF but will not be shown in the document

% this must go after the closing bracket ] following \twocolumn[ ...

% This command actually creates the footnote in the first column
% listing the affiliations and the copyright notice.
% The command takes one argument, which is text to display at the start of the footnote.
% The \icmlEqualContribution command is standard text for equal contribution.
% Remove it (just {}) if you do not need this facility.

%\printAffiliationsAndNotice{}  % leave blank if no need to mention equal contribution

\begin{abstract}
\vspace{-10pt}
Solving strategic games with huge action space is  a critical yet under-explored topic in economics, operations  research and artificial intelligence.  This paper\footnote{$^*$Equal contributions. Corresponding to <yaodong.yang@outlook.com>. $^1$Huawei U.K.  $^2$ University of Southampton (part of the work done during internship at Huawei). $^3$Imperial College London.  $^4$University College London. $^5$University of California, Irvine.} proposes new learning algorithms for solving two-player zero-sum normal-form games where the number of pure strategies is prohibitively large.  Specifically, we combine no-regret analysis from online learning with Double Oracle (DO) methods from game theory.~\footnote{Accepted at Transactions on Machine Learning Research (TMLR).} 
Our method -- \emph{Online Double Oracle (ODO)} -- is provably convergent to a Nash equilibrium (NE).  
Most importantly, unlike normal DO methods, ODO is \emph{rationale} in the sense that each agent in ODO can exploit strategic adversary with a regret bound of $\mathcal{O}(\sqrt{T k \log(k)})$  where $k$ is not the total number of pure strategies, but rather the size of \emph{effective strategy set} that is linearly dependent on the support size of the NE. On tens of different real-world games, ODO outperforms   DO,   PSRO methods, and no-regret algorithms such as Multiplicative Weight Update by a significant margin, both in terms of convergence rate to a NE and average payoff against strategic adversaries. 
\end{abstract}

%write the introduction focusing on double oracle, multiagent learning community.
%make it rigorous about the assumption, how to define {\vl}_t
%changing the the weight w
%e-best response in the algorithm 1
\vspace{-15pt}
\section{Introduction}
\vspace{-7pt}

% \yaodong{1) solving zero-sum games is an important topic. It has many real-world applications in game AI, adversarial attack etc. 2) we look at scalable appraoch to solve zero-sum games, where the number of pure strategies is huge or even open-ended. 3) introduce doube oracle's two-step process and PSRO and meta-games, with robot-detection example, and maybe one starcraft example.   4) highlight the weakness of these apporach: not theoretical grounded, cannot consider noise in feedback, and others.  5) hihglight our contribution..  BTW, don't mention the word of online MDP in the introudction at all.}\lecong{implemented}

%\lecong{new introduction start from here}
Understanding games with large action spaces is a critical  topic in a variety of fields including but not limited to economics, operations research  and artificial intelligence.   
%Applications include designing game AI \cite{vinyals2019grandmaster}, training Generative Adversarial Networks \cite{goodfellow2020generative}, and robust  control \cite{abdullah2019wasserstein}. 
A key challenge to solving large games is to compute a Nash equilibrium (NE) \cite{nash1950equilibrium} in which no player will be better off by deviation. 
Unfortunately, finding a NE is generally intractable in games; computing a two-player NE is known to be PPAD-hard   \cite{chen2006settling}. %\Haitham{English needs fixing in the previous paragraph. The 1st and 2nd paragraph are not connected.}
An exception is  two-player zero-sum games where the NE can be tractably solved by a linear program (LP) \cite{morgenstern1953theory}.  
Despite their polynomial-time solvability  \cite{van2020deterministic},  LP solvers are not adequate in games with prohibitively  large action spaces.  
As a result, researchers shift focus towards finding  approximation solutions \cite{mcmahan2003planning,lanctot2017unified} or  developing new solution concepts \cite{omidshafiei2019alpha,yang2020alphaalpha}. %, such as fictitious play (FP) \cite{browniterative}. % \cite{browniterative,zinkevich2007regret} %, such as fictitious play (FP)  and counterfactual regret minimisation (CFR) \cite{zinkevich2007regret}.   

Double Oracle (DO) algorithm \cite{mcmahan2003planning} and its variation Policy Space Response Oracles (PSRO)  \cite{lanctot2017unified} are powerful approaches to finding approximate NE in games where the support of a NE is relatively small.
%An important design principle that underpins many approximation solutions to  NE in large-scale two-player zero-sum (normal-form) games is the iterative best-response dynamics. %\stephen{DO algos are not best response dynamics, we should remove this sentence.}
%\lecong{An important design principle that underpins many approximation solutions to  NE in large-scale two-player zero-sum (normal-form) games is the best-response oracle}
%Two representative methods are double oracle (DO) \cite{mcmahan2003planning} and policy space response oracle (PSRO) \cite{lanctot2017unified}. 
%\stephen{Proposed sentence instead of the first two: "The tabular double oracle (DO) algorithm \cite{mcmahan2003planning} and its neural counterpart Policy Space Response Oracles (PSRO) \cite{lanctot2017unified} are powerful approaches to finding approximate Nash equilibria in games where the support of the Nash equilibrium is relatively small."}
In the dynamics of DO \cite{mcmahan2003planning},  players are  initialised with a limited number of strategies, thus playing only a sub-game of the original game; then, at each iteration, a best-response strategy to the NE of the last sub-game, which is assumed to be given by an \emph{Oracle},  will be added into each agent's strategy pool. The process stops when the best response is already in the strategy pool or the performance improvement becomes trivial.  
When an exact best-response strategy is not achievable, an approximate solution is often adopted. For example, PSRO methods \cite{lanctot2017unified,mcaleer2020pipeline,nieves2021modelling} applies reinforcement learning (RL) \cite{sutton2018reinforcement}  oralces to compute the best response.  
%Although in the worst case, DO/PSRO recovers to solve the original game, their effectiveness relies  on the fact that the support size of an NE is often much smaller than the game size \cite{mcmahan2003planning}. %\Haitham{There is so much talk before geting int the problem u want to solve in this paper.} %\cite{mcmahan2003planning}. 
 
%\yaodong{i think we missed an important method, which is the fictitious play. we need to say why it is not as good as double oracle to rule it out of the disucussion in the rest of the paper}\lecong{we use FP as a baseline in our experiment, so I think it is better to mention that in the related work}\zheng{I think this paragraph is well written but it may be more suitable to be placed in related work. To make the introduction concise and punctual we just need probably one sentence stating what DO/PSRO could do then focus on its limitations.}
%\dm{We seem to have neglected the fact that in this setting, the agents receive noisy feedback (and the matrix game is not known to them). As opposed to LP and FP which have neither of these features. I think this is a nice advantage worth emphasising} \lecong{changed. Mentioned it in the drawback of DO algorithms}

\begin{table*}[t!]
\vspace{-25pt}
\caption{{\small{Properties of existing  solvers on two-player zero-sum games $\mA_{n\times m}$. $^*$:DO in the worst case has to solve all sub-games till reaching the full game, so the time complexity is one order magnitude larger than LP. $^\dag$: Since PSRO uses approximate best response, the total time complexity is unknown. $^\ddag$ Note that the regret bound of ODO can not be directly compared with the time complexity of DO, which are two different notions.} }}
%\yaodong{discuss in the intro 0) comment on K can reach N, but usually it is smaller 1) FP is not no-regret 2) DO needs to know the full payoff}}
\vspace{-10pt}
\label{tbl:gamesolver}
\begin{center}
   \resizebox{1.\textwidth}{!}{
  \begin{tabular}{p{140pt} c  >{\centering}p{60pt} >{\centering}p{80pt} c c}
  \toprule
   \textbf{Method} & \pbox{50pt}{\textbf{Rational } \textbf{(No-regret)}} & \pbox{60pt}{\textbf{Allow $\epsilon$-Best Response}} & \pbox{80pt}{\textbf{No Need to Know the Full Matrix $\mA$}} & \pbox{100pt}{\textbf{Time Complexity ($\tilde{\mathcal{O}}$) /}  \textbf{ Regret Bound (${\mathcal{O}}$)}} & \textbf{Large Games} \\  \midrule 
      \pbox{120pt}{Linear Programming 
      \cite{van2020deterministic}}&  &  &  & 
      $\tilde{\mathcal{O}}\big(n \exp(-T/n^{2.38})\big)$& \\   \arrayrulecolor{black!30}\midrule 
 \pbox{140pt}{(Generalised) Fictitious Play 
\cite{leslie2006generalised}} & & \checkmark &  \checkmark & $\tilde{\mathcal{O}}\big(T^{-1/(n+m-2)}\big)$ & \\ \arrayrulecolor{black!30}\midrule 
 \pbox{120pt}{Multipli. Weight Update
\cite{freund1999adaptive}}& \checkmark &  & \checkmark &$\mathcal{O}\big(\sqrt{ \log(n)/T} \big)$ & \\ \arrayrulecolor{black!30}\midrule 
   \pbox{120pt}{ Double Oracle 
   \cite{mcmahan2003planning}}& & &  \checkmark  & $\tilde{\mathcal{O}}\big(n \exp(-T/n^{3.38})\big)^*$ & \checkmark \\ \arrayrulecolor{black!30}\midrule 
    \pbox{140pt}{Policy Space Response Oracle 
    \cite{lanctot2017unified}}&  & \checkmark &  \checkmark & $\times^\dag$  & \checkmark \\  \arrayrulecolor{black!30}\midrule \midrule 
     \pbox{120pt}{\textbf{Online Double Oracle}} & \checkmark & \checkmark & \checkmark & $\mathcal{O}\big(\sqrt{k \log(k)/T}\big)^\ddag$ & \checkmark \\ 
   \arrayrulecolor{black} \bottomrule
  \end{tabular}}
\end{center}
\vspace{-20pt}
\end{table*}
While DO/PSRO provides an efficient way to approximate the NE in large-scale zero-sum games, it still faces two open challenges. \textbf{Firstly}, DO methods require both  players to \emph{coordinate} in order to solve the NE in  sub-games (i.e., both players have to follow the same learning dynamics such as Fictitious  Play (FP)  \cite{brown1951iterative}). This contradicts many real-world scenarios where an opponent can play any (non-stationary) strategies in sub-games.  
\textbf{Secondly}, and most importantly, DO methods are not \emph{rational}  \cite{bowling2001rational}, in the sense that they do not provide a learning scheme that can exploit the adversary (i.e., achieving no-regret).
%the DO algorithm only focuses on finding the fixed point solutions of games but not a learning scheme a rational player can follow when facing unknown adversarys. 
%Learning in the presence of a strategic adversary is critical in many real-world problems. %our adversarys are not always another DO program\Haitham{Rewrite. Weak argument. Try to make it rigourours somehow.}.  
While a NE strategy guarantees the best performance in the worst scenario, it can be too pessimistic for a player to play such a strategy. %, especially when there are better ways to exploit its adversary. 
For example, in Rock-Paper-Scissors (RPS), playing the  NE of $(1/3, 1/3, 1/3)$ makes one player unexploitable. However,  %playing the NE strategy also disables the potential of  exploitation. 
if the adversary acts irrationally and addicts to one strategy, say ``Rock'', then the player should  exploit the adversary  by  consistently  playing ``Paper'' to achieve larger rewards. %Thus, a more robust learning algorithm that works well in both pessimistic and optimistic scenario with limited information of the game is desired.
No-regret algorithms \cite{cesa2006prediction,shalev2011online} prescribe a learning scheme in which a player is guaranteed to achieve minimal regret against the best fixed strategy in hindsight when facing an unknown adversary  (either rational or irrational).
Notably, if both players follow no-regret algorithms, then it is guaranteed that their time-average policies will converge to a NE in zero-sum games~\cite{blum2007learning}. However, the regret bound  of popular no-regret algorithms~\cite{freund1999adaptive,auer2002nonstochastic} usually depend on the game size; for example, Multiplicative Weight Update (MWU)~\cite{freund1999adaptive} has a regret bound of $\mathcal{O}(\sqrt{T \log(n))}$ and  EXP3~\cite{auer2002nonstochastic} in bandit setting has a regret of $\mathcal{O}(\sqrt{T n \log(n)})$, where $n$ is number of pure strategies (i.e., experts). As a result, directly applying no-regret algorithms, though rational, cannot solve large-scale  zero-sum games.  

In this paper, we seek for a scalable solution to two-player zero-sum normal-form games where the game size (i.e., the number of pure strategies) is prohibitively huge. 
Our main analytical tool is the no-regret analysis in online learning \cite{shalev2011online}. Specifically, 
by combining the no-regret analysis \cite{freund1999adaptive} with DO methods, we  propose \emph{Online Double Oracle (ODO)} algorithm. 
ODO inherits the key benefits from both sides. It is the first DO method that 
%is provably convergent to the NE (i.e.,  $1^\text{st}$ challenge above), and it 
enjoys the no-regret property and can exploit unknown types of adversary during the game play.  Importantly,  % with no coordination between players in order to solve sub-games %(i.e., $1^\text{st}$ challenge above) %\yaodong{both ODO and DO does not need} \lecong{now?}. 
%Furthermore, 
%unlike many no-regret algorithms \cite{freund1999adaptive,auer2002nonstochastic} whose regret bound depends on the game size, thus  inapplicable on  large games, 
our ODO method achieves a regret of $\mathcal{O}(\sqrt{T k \log(k)})$  where $k$ only depends on the support size of the NE rather than the game size. % (i.e., $3^{rd}$ challenge above).
We test our algorithm on tens of games including random matrix games and real-world games of different sizes \cite{czarnecki2020real},  including Kuhn Poker and Leduc Poker. Results show that in almost all games, ODO outperforms both vanilla DO and strong PSRO   \cite{lanctot2017unified,mcaleer2020pipeline} and online learning baselines  \cite{freund1999adaptive} in terms of exploitability (i.e., distance to an NE) and average payoffs against different adversaries. 
\vspace{-5pt}
\section{Related Work}
\vspace{-5pt}
The novelty of our ODO methods contributes to both  game theory domain and online learning domain. We make the list of existing game solvers for comparisons in Table \ref{tbl:gamesolver}. 
%Double Oracle part 
%Solving large two player zero-sum game with best oracle and its application \cite{hellerstein2019solving} 
%\yaodong{please cite \cite{gilpin2008first}, \cite{adam2020double}}

%\lecong{related work If the adversary is oblivious(i.e., the place of sensors will be the same in each round regardless the moves of the robust), then the robust can apply no-regret algorithm in Online Markov Decision Processes to achieve a good average performance. However, if the adversary is an adaptive learner who can spot the usual ways the robot uses and place the sensors in that ways to detect to robot, then the current state-of-art no-regret algorithms can not be applied in this non-oblivious environment}
%Algorithms such as fictitious play \cite{browniterative} and double oracle (DO) \cite{mcmahan2003planning} are the direct embodiments of this concept. 
% \cite{lanctot2019openspiel}.  \dm{Mention generalised weakened FP here?}

% No-regret algorithms have been widely used due to its guaranteed performance against the best-fixed strategy in  hindsight 
%\cite{freund1999adaptive,shalev2011online,cesa2006prediction}. 
% In solving normal-form games, if each player follows a no-regret algorithm, then the learning dynamics would converge to a Hannan set of strategies \cite{cesa2006prediction} \yaodong{either explain hannan or say Nash}. 

Approximating NE has been extensively studied in  game theory and multi-agent learning domains \cite{yang2020overview}. 
FP~\cite{browniterative} and generalised FP~\cite{leslie2006generalised} are classic solutions where each player adopts a policy that best responds to the time-average policy of the adversary. 
Although FP is provably convergent to NE in zero-sum games, they are prohibited from solving large games since it suffers from the curse of dimensionality due to the need to iterate all pure strategies in each iteration; and, the  convergence rate depends exponentially on the game size~\cite{brandt2010rate}.  %\yaodong{this claim may be wrong \lecong{in general cases, it is correct. we have different covergence for FP only in specific games}}.  %\yaodong{can we say this}.   
On solving large-scale zero-sum games, DO \cite{mcmahan2003planning, mcaleer2021xdo} and PSRO methods \cite{lanctot2017unified, mcaleer2020pipeline,nieves2021modelling} have shown remarkable empirical successes. For example, a distributed implementation of PSRO can  handle games of size $10^{50}$ \cite{mcaleer2020pipeline}. 
%Yet, there is still lack of theoretical insight; the worst-case scenario of applying DO/PSRO method is to solve the original game.  
Yet, both FP and DO methods offer no knowledge about how to exploit the adversary \cite{hart2001general}, thus regarded as not \emph{rational} \cite{bowling2001rational}.  Modern solutions that are rational such as CFR methods  \cite{zinkevich2007regret,lanctot2009monte} are designed  for extensive-form games only. 

%can also lead to Nash equilibrium convergence in this setting, but without regret performance guarantee against adversary \yaodong{not sure whether this claim is correct, need a reference}. 
Algorithms with no-(external) regret property can achieve guaranteed performance against the best-fixed strategy in  hindsight \cite{shalev2011online,cesa2006prediction}, thus they are commonly applied to tackle adversarial environments. 
%\yaodong{what does good strategy mean \lecong{strategy with high average payoff}}. 
%\yaodong{all algos have the same regret? and missing citations. and where is the exp3, hedge etc? \lecong{they will all depend on the number of experts, we have add exp-3 in}}
However, conventional no-regret algorithms such as 
%\yaodong{add citation one by one} 
%\yaodong{put your new online learning baseline here. OMD etc} \lecong{we may not have OMD in our experiment as it takes too long to run for larger game. Instead, we will have MWU and DO/w MWU as baseline for average performance. If we can manage to run OMD, then we can include it later}
Follow the Regularised Leader~\cite{shalev2011online}, Multiplicative Weight Update (MWU)~\cite{freund1999adaptive} or EXP-3~\cite{auer2002nonstochastic}  have the regret bound that is based on the number of pure strategies (i.e., experts). %Specifically, MWU has a regret bound $R_T=\mathcal{O}(\sqrt{T \log(n)})$, and EXP-3 has a regret bound of $R_T=\mathcal{O}(\sqrt{T n\log(n)})$. 
%\yaodong{why log is bad?\lecong{it is not bad but depend on the size of strategy set n}}
Moreover, these algorithms consider the full strategy set during the update, which deter their applications on large size games.
In this paper, we leverage the advantages of both   DO methods and no-regret learning algorithms to propose the ODO method. Our ODO enjoys the benefits of both being applicable to solving  large games and being able to exploit opponents. 
\vspace{-5pt}
\section{Notations \& Preliminaries}
\vspace{-5pt} 
%\yaodong{pls don;t start from a MDP, start from a normal-form game. and then you can say my solution formulat solving MDPS into solvin NFGs}\lecong{changed}
A  two-player zero-sum normal-form game is often described by a payoff matrix $\mA$ of size $n \times m$. 
%or even open-ended \Haitham{What does open ended mean?}
The rows and columns of $\mA$ are the pure strategies of the row and the column players, respectively.
We consider $n$ and $m$ to be prohibitively large numbers. % (e.g., $3^{936}$ in Leduc Poker\footnote{Leduc Poker can be efficiently represented by an extensive-form game. In this work, we used the normal-form representation to show the potential of our methods in solving large games. } \cite{southey2012bayes}).  
%We consider a repeated two-player zero-sum game. It can be described by a matrix $\mA$, where $\mA$ is an $n \times m$ non-zero matrix with entries in $[0,1]$ ($n$ and $m$ can be large or even open-ended).The rows and columns of $\mA$ represent the pure strategies of the row and column players respectively.
We denote the set of pure strategies for the row player as ${\Pi}:=\{{\va^1}, {\va^2},\dots {\va^n}\}$, and ${C}:=\{\vc^1, \vc^2, \dots, \vc^m\}$ for the column player. 
The set of row-player mixed strategies  is written as $\Delta_{\Pi}:= \big\{{\vpi}|{\vpi}= \sum_{i=1}^n x_i {\va^i},\; \sum_{i=1}^n x_i=1, x_i \geq 0, \forall i \in [n] \big\}$, and for the column player, it is $\Delta_{C}:=\{\vc|\vc= \sum_{i=1}^m y_i \vc^i, \sum_{i=1}^n y_i=1, y_i \geq 0, \forall i \in [m]\}$.  %denotes the  mixed-strategy set for the column player. 
The support of a mixed strategy is written as $\text{supp}(\vpi):=\{\va^i \in \Pi | x_i \neq 0 \}$, with its size  $|\text{supp}(\vpi)|$. 
%\yaodong{define support and effectvei strategy here?} \lecong{can not define effective strategy set without the OSO algorithm}

We consider $\mA_{i, j} \in  [0,1]$ representing the (normalised) loss of the row player when playing against the column player. 
At the $t$-th round, 
%\Haitham{Where does this round come from. Maybe you want to say the play against each other in rounds or something like that. \lecong{it is a repeated game so I think it is clear to say about round of the game}}
the payoff for the joint-strategy profile $(\vpi_t \in \Delta_{\Pi}, {\vc}_t \in \Delta_{C})$ is written as $(-\vpi_t^{\top}\mA \vc_t, \vpi_t^{\top}\mA \vc_t)$. 
%the row player chooses a mixed strategy $\vpi_t \in \Delta_{\Pi}$ (resp. ${\vc}_t \in \Delta_{C}$), then the row player's payoff is $-\vpi_t^{\top}\mA \vc_t$ while the column player's payoff is $\vpi_t^{\top}\mA \vc_t$. 
The row player 's goal is to minimise  the quantity $\vpi_t^{\top}\mA \vc_t$. 
 In this paper, we consider the online setting in which players do not know the matrix $\mA$ and adversary's policy, but rather receive a loss value after the strategy is played:  at timestep $t+1$, the row player observes $\vl_t=\mA \vc_t$ given  by the environment, and then it plays a new strategy $\vpi_{t+1}$.    
% Finally, we denote $\langle \cdot, \cdot \rangle$ as the inner product of two vectors. 
 
\textbf{Nash Equilibrium.} NE of two-player zero-sum games can be defined by  the minimax theorem \cite{neumann1928theorie}: 
\begin{equation} \label{minimax theorem}
    \min_{\vpi \in \Delta_{\Pi} } \max_{\vc \in \Delta_{C} } \vpi^{\top}\mA \vc=  \max_{\vc \in \Delta_{C} } \min_{\vpi \in \Delta_{\Pi} } \vpi^{\top}\mA \vc =v,
\end{equation}
for some $v$ $\in \mathbb{R}$. The $(\vpi^*, \vc^*)$ that satisfies Equation (\ref{minimax theorem})  %\Haitham{Capitalise equation = Equation} 
is the NE of the game. 
In general, one can apply LP method to solve the NE in small games \cite{morgenstern1953theory}. However, when $n$ and $m$ grows large, the time complexity is not affordable. 
%\Haitham{Tell me why you need a more general notion saying something like, this is hard to compute and then we need approximation or whatever the reason is. But this seems arbitrary from Nash to epsilon.} 
A more general solution concept  is the $\epsilon$-Nash equilibrium, written as  

\textbf{$\epsilon$-Nash Equilibrium.} For  $\epsilon >0$,  we call a joint strategy $(\vpi, \vc) \in \Delta_{\Pi} \times \Delta_{C}$ an $\epsilon$-NE if it satisfies 
\begin{equation}
    \max_{\vc \in \Delta_C} \vpi^{\top}\mA \vc -\epsilon \leq \vpi^{\top}\mA \vc \leq \min_{\vpi \in \Delta_{{\Pi}}}  \vpi^{\top}\mA \vc +\epsilon.
    \label{eq:ene}
\end{equation}
%\Haitham{All equations should be punctuated, either a . or a comma}
%When $\epsilon$ goes to 0, $\epsilon$-NE becomes the exact NE.
%In symmetric games (i.e., $n = m$), 
%%\yaodong{undefined}, 
%the support size of NE can be as large as the game's size. In asymmetric games where  $n \gg m$, we can bound the support size of NE by the following lemma. 
Many NE approximation methods (e.g., DO \cite{mcmahan2003planning} / PSRO \cite{lanctot2017unified}) are developed based on the  assumption that the support size of NE is small. Formally, it is written as follows. 
\begin{assumption}[Small Support Size of NEs]\label{small support size assumption}
Let $|\cdot|$ denote the cardinality of a set, $(\vpi^*, \vc^*)$ be a NE of the game $\mA_{n \times m}$, we assume the support size of $(\vpi^*, \vc^*)$ is smaller than the game size:. 
\begin{equation}
\max\big(|\operatorname{supp}(\vpi^*)|, |\operatorname{supp}(\vc^*)|\big) <  \min(n,m).
\end{equation}
\label{keyassump}
\vspace{-15pt}
\end{assumption}
Assumption \ref{small support size assumption}  holds in many settings. 
In symmetric games with random entries  \cite[Theorem 2.8]{jonasson2004optimal}, it has been proved that the expected support size of NE will be $(\frac{1}{2}+\mathcal{O}(1))n$ where $n$ is the game size; this means the support size of a NE strategy is only half of the game size. 
In asymmetric games (e.g., $n\gg m$), we provide the following lemma under which Assumption \ref{small support size assumption} also holds. 
\begin{lemma}\label{lemma bound on k}
In asymmetric games $\mA_{n \times m}, n \gg m$, if the 
%Suppose the matrix $\mA_{n \times m}$ has a unique %\yaodong{do we really need uniquness?} \lecong{we will need uniqueness for this lemma} 
NE $(\vpi^*, \vc^*)$ is unique, then the support size of the NE  will follow $|\operatorname{supp}(\vpi^*)| =  |\operatorname{supp}(\vc^*)| \leq m 
$. (The full proof is in the Appendix {\color{blue}{A.1}}.)
% \begin{equation*}
% \max(\big|\operatorname{supp}(\vpi^*)\big|, \big|\operatorname{supp}(\vc^*)\big|) \leq \min(n,m),
% \end{equation*}
\end{lemma}
Notably,  the premise of a unique NE  in Lemma \ref{lemma bound on k} is a generic property  for zero-sum games. On the set of zero-sum normal-form games, the set of zero-sum games with non-unique equilibrium has Lebesgue measure zero  \cite{van1991stability}.
Empirically, we also show that the Assumption \ref{small support size assumption} holds on tens of real-world zero-sum games \cite{czarnecki2020real} (see Table 2 in Appendix {\color{blue}{C}}) and randomly generated games (see Figure \ref{size of k figure}).  
Later in Section \ref{sec:large-k}, we develop  solutions when Assumption \ref{small support size assumption} is violated.  
%We also introduce a result about the support size of a NE, which underpins  theoretical guarantees of later sections. 
%\begin{lemma}\label{lemma bound on k}
%Suppose the matrix $\mA_{n \times m}$ has a unique \yaodong{do we really need uniquness?} \lecong{we will need uniqueness for this lemma} Nash equilibrium $(\vpi^*, \vc^*)$, then the support of the Nash equilibrium of the row player will follow:
%\begin{equation*}
%\max(\big|\operatorname{supp}(\vpi^*)\big|, \big|\operatorname{supp}(\vc^*)\big|) \leq \min(n,m),
%\end{equation*}
%where $|\cdot|$ denotes the cardinality of a set. 
%\end{lemma}
%The proof is given in the Appendix A.1. Notably,  the unique \yaodong{unique?} NE premise in Lemma \ref{lemma bound on k} is a generic condition. It has been shown that within the set of discrete zero-sum games, the set of zero-sum games with non-unique equilibrium has Lebesgue measure zero \cite{van1991stability}. %Therefore, if the entries of payoff matrix $\mA$ are sampled independently from some continuous distribution, then the game $\mA$ has a unique NE with probability one.

%\yaodong{where this result is used. give a hint why we need this lamma}
\vspace{-5pt}
\subsection{Double Oracle Method}\label{section: Double Oracle}
\vspace{-5pt}
%\yaodong{horrible section, all notaitons are not defined. please rewrite}
%\yaodong{don't repeat what the pseduco code says, but rather describe the process. check my diveristy paper}
\begin{wrapfigure}{R}{0.6\textwidth}
\vspace{-3pt}
\begin{algorithm}[H]
\caption{Double Oracle Algorithm \cite{mcmahan2003planning}}
\label{Double Oracle algorithm}
\begin{algorithmic}[1]
\STATE {\bfseries Input:} Full  strategy set $\Pi, C$ %of players
\STATE  Initialise sets of strategies $\Pi_0, C_0$ 
\FOR{\text{$t=1$ to $\infty$}}
\IF{ $\Pi_{t}\neq \Pi_{t-1}$ or $C_t \neq C_{t-1}$}
\STATE Solve the NE of the sub-game  $\mG_t$: \\
$(\vpi^*_t, \vc^*_t)=\arg\min_{\vpi \in \Delta_{\Pi_t} } \arg\max_{\vc \in \Delta_{C_t} } \vpi^{\top}\mA \vc$
%and obtain the Nash equilibrium $((\vpi^*_t, \vc^*_t))$ \yaodong{solve Nash please use math to describe}
\STATE Find the best response $\va_{t+1}$ and $\vc_{t+1}$ to $(\vpi^*_t, \vc^*_t)$:  \\
\ \ \ \ \ \ \ \ \ \ \ \ \ $\va_{t+1}=\arg\min_{\va \in \Pi} \va^{\top} \mA\vc^*_t $ \\
\ \ \ \ \ \ \ \ \ \ \ \ \ $\vc_{t+1}=\arg\max_{\vc \in C} {\vpi^*_t}^{\top} \mA \vc$
%\yaodong{where is this BR defined, also for all texts in formula, pleas use $\operatorname{BR}$}
\STATE  $\Pi_{t+1}= \Pi_t \cup \{\va_{t+1}\}, C_{t+1}= C_t \cup \{\vc_{t+1}\}$
\ELSIF{$\Pi_{t} = \Pi_{t-1}$ and $C_t = C_{t-1}$} 
\STATE Terminate
\ENDIF
\ENDFOR
\end{algorithmic}
\end{algorithm}
\vspace{-20pt}
  \end{wrapfigure}
The pseudocode of DO algorithm ~\cite{mcmahan2003planning} is listed in Algorithm \ref{Double Oracle algorithm}. 
The DO method approximates NE in large-size zero-sum games by iteratively creating and solving a series of sub-games (i.e.,  game with a restricted set of pure strategies). 
Specifically, at time step $t$, the DO learner solves the NE of a sub-game $\mG_t$.  
Since the sets of pure strategies of the sub-game $\mG_t = (\Pi_t, C_t)$ are often much smaller than the original game, the NE of sub-game $\mG_t$ can be easily solved in line $5$.     
Based on the NE of the sub-game: $(\vpi^*_t, \vc^*_t)$, each player finds a best response to NE (line $6$), and expand their strategy set (line $7$). PSRO methods \cite{lanctot2017unified,mcaleer2020pipeline} are variants of DO methods in which RL methods are adopted to approximate the best response strategy. 
For games where Assumption \ref{keyassump} does not hold, DO methods restore to solve the original game and have no advantages over LP solutions.  
%In a game with finite pure strategy set $\Pi, C$ and known matrix $\mA$, the DO algorithm guarantees convergence to the Nash equilibrium of the game.  

Although DO can solve large-scale zero-sum games, it requires to \emph{coordinate} both players to consistently find the NE (see line 5 in Algorithm \ref{Double Oracle algorithm}); this  renders an disadvantage of DO that when applied in real-world games, it cannot exploit the opponent who can play any non-stationary strategy  (see the example of RPS in Introduction). 
Our ODO solution address this problem by combining DO with tools in online learning. 

%DO is designed to find the NE of the game, and it assumes that it has full control over the both players. In situations where there is no coordination between players, then DO method will not guarantee the convergence to an NE. Further, DO method is not rational in the sense that it does not provide a learning schema that can exploit the adversary (see the example of RPS in Introduction).
%Although DO is provably convergent to NE in finite zero-sum games, 
%in the worst case, it still  recovers to solve the original  game $\mA$. Moreover, the DO algorithm requires the players to know the game matrix $\mA$ to find the NE of the subgame (line $4$) (i.e., without the knowledge of the game matrix, DO requires player to collobarate to find the NE) \yaodong{problematic claim!!} \lecong{now}, and it is not rational in the sense that it does not provide a learning schema that can exploit the adversary (see the example of RPS in Introduction). 
%All of these challenges prevent it from being applied on adversarial setting. 

%\Haitham{Another arbitrary transition, why did u decide to go to online learning what's the pt of that. We were in nash suddenly jumped. Walk me by the hand tell me why u now talk about OL}
\vspace{-5pt}
\subsection{Online Learning}
\vspace{-5pt}
Solving the NE in large-scale games is demanding. An alternative approach is to consider learning-based methods. % that do not necessarily need access to the game matrix $\mA$. 
We hope that by playing the same game repeatedly,  a learning algorithm could  approximate the  NE asymptotically.  
A common metric to quantify the performance of a learning  algorithm is to compare its cumulated payoff with the best fixed strategy in hindsight, which is also called  \emph{(external) regret}. %Formally, we have the following definition.   %\zheng{strategy? to be consistent in language} 
\begin{definition}[No-Regret Algorithms]
Let $\vc_1, \vc_2, \dots$ be a sequence of mixed strategies played by the column player, an algorithm of the row player that generates a sequence of mixed strategies $\vpi_1, \vpi_2, \dots$ is called a no-regret algorithm if we have
\vspace{-5pt} %\yaodong{why it is not =0 \lecong{so when it is negative, the algorithm can outperform the best fixed strategy in the hindsight. the online learning community hardly considers this problem as currently there is no algorithm like it. But taking the advice from David and others to make it concrete.}}
    \begin{align}
    \lim_{T \to \infty} \frac{R_T}{T} =  0, \ \ \ \  R_T= \max_{\vpi \in \Delta_{\Pi}} \sum_{t=1}^T \left( \vpi_t^{\top}\mA \vc_t-\vpi^{\top} \mA \vc_t\right).\end{align}
% where the regret is written as 
%     \begin{align}
%  \;\;& R_T= \max_{\vpi \in \Delta_{\Pi}} \sum_{t=1}^T \left( \vpi_t^{\top}\mA \vc_t-\vpi^{\top} \mA \vc_t\right).
%     \end{align}
    \vspace{-15pt}
% \lecong{by changing the definition to $\lim ... \leq 0$, we solve one of the comment of the reviewer that the algorithm can outperform the best fixed strategy, however, this definition is not standard. what do you think?}\zheng{I think the confusion comes from that reviewers (not only the one who made this comment) will easily think that piAc is the payoff of the row player but not the loss. Therefore, the Def.2 becomes confusing as when you have large regret it actually means you achieved higher payoff. See the comment 6 just below this comment in the reviews, the reviewer is indeed confused by how you call matrix A as payoff matrix but row player needs to minimize it.}{\lecong{changed the payoff matrix A to matrix A, with is fine. Also I think it is more accurate to use $\leq 0$ in the above definition }}
\end{definition}
%\yaodong{introduce MWU, 1) show the update rule, 2) convergence bound, 3) and mentioned when two no regret play, then NASH}
No-regret algorithms are of great interest in two-player zero-sum games since  if both players follow a no-regret algorithm (not necessarily the same one), then the average strategies of both players converge to a NE of the game~\cite{cesa2006prediction,blum2007learning}. For example, 
a well-known learning algorithm for games that has the no-regret property is the MWU algorithm \cite{freund1999adaptive}, which is described as %which slowly updates the strategy based on the previous loss in each iteration. The detail algorithm can be described as follow: 
\begin{definition}[Multiplicative Weights Update]\label{MWU definition} Let $\vc_1, \vc_2, ...$ be a sequence of mixed strategies played by the column player. The row player is said to follow the MWU if  $\vpi_{t+1}$ is updated as follows 
%\yaodong{can you put the two lines into one equation. i don't see why we split, and also, what is $\pi_0=?$}:
\vspace{-3pt}
\begin{equation}\label{eq: MWU update}
    \begin{aligned}
    \vpi_{t+1}(i)=\vpi_t(i)  \frac{\exp({-\mu_t {\va^i}^{\top} \mA \vc_t})}{\sum_{i=1}^n \vpi_t(i)\exp({-\mu_t {\va^i}^{\top} \mA \vc_t})}, \ \  \forall i \in [n]
    \end{aligned}
\end{equation}
where $\mu_t >0$ is a parameter, $\vpi_0=[1/n,\dots,1/n]$ and $n$ is the number of pure strategies (a.k.a. experts).   The regret of MWU can be bounded by $ R_T=\max_{\vpi \in \Delta_{\Pi}}R_T(\vpi) \leq \sqrt{\frac{T\log(n)}{2}}$. 
%With a proper setup of $\mu_t$ in MWU, the regret $R_T$ can be bounded by~\cite{freund1999adaptive}:
% \begin{equation}\label{MWU regret bound}
%     R_T=\max_{\vpi \in \Delta_{\Pi}}R_T(\vpi) \leq \sqrt{\frac{T\log(n)}{2}}, 
% \end{equation}
\vspace{-5pt}
\end{definition} 
Intuitively, the MWU method functions by putting larger weights on the experts who have lower losses in the long run. Thus, compared to the best-fixed strategy in hindsight (i.e., the expert with the lowest average loss), the MWU can achieve the no-regret property. However, since the MWU requires updating the whole pure strategy set in each iteration, it is not applicable on large-size games.
%Intuitively, at each iteration, the MWU updates ensure to increase the weights of pure strategies with low loss slowly, thus preventing the adversary from exploiting the process\yaodong{rewrite this sentence, not clear, suggest to mimic how other paper describe it}.
%\yaodong{we need more intution of MWU, why it works, 1-2 sentence}

%\begin{remark}
%We note that our main results, the Online Single Oracle~\ref{Online Single Oracle algorithm} and Online Double algorithm , can still apply in situations where the loss function $\vl_t$ takes different forms rather than linear loss function (i.e., the only requirement is that the no-regret algorithms can be applied to the effective strategy $\Pi_t$ and the strategy set is discrete). We consider the matrix form setting through the paper to better demonstrate the idea of our algorithm. %zheng{I feel this is unclear to readers. e.g. we never introduced what is meta game before this}\lecong{fixed}
%\end{remark}
%\yaodong{please add a parapgrah here with pseduo-code of what is double oracle}\lecong{added: maybe we need to move it to the Appendix} 

\vspace{-5pt}
\section{Online Double Oracle}
\vspace{-5pt}
%\Haitham{As we discussed this needs a story before the details \lecong{how about now?}}
In this section, we introduce the ODO algorithm.   
ODO enjoys the no-regret property. Compared to the DO method, ODO can  play strategically to exploit the opponent. 
Compared to the MWU algorithm, ODO can be applied on solving large zero-sum games. 
The following subsections are organised as follows. 
We first introduce the key building block of ODO, which is Online Single Oracle (OSO). % algorithm and derive its regret bound. 
ODO is the algorithm in which both players adopt the OSO method. We  derive ODO's convergence rate to NE, and also  analyse ODO's performance when players can only access \emph{approximate} best responses.  
Finally, we provide a robust extension of OSO considering cases when the size of NE support is unknown (i.e.,  Assumption \ref{keyassump} may not hold). %,  we provide an effective extension of OSO that maintains OSO's performance when Assumption \ref{keyassump} holds while guarantees to perform not worse than standard no-regret algorithms otherwise. \yaodong{change claim, still maintain no-regret proert is incorrect} 

\subsection{Online Single Oracle Algorithm}\label{Section Online Single Oracle}
%The development of ODO is based on the key assumption that the support size of a NE in zero-sum game is often smaller than the game size (see Lemma \ref{lemma bound on k}).  
%Here we introduce OSO. 
One can think of OSO as an online  counterpart of \emph{Single Oracle} in DO \cite{mcmahan2003planning}, but achieves no-regret property. 
A key advantage of OSO is that its regret bound 
 does not depend on a player's size of strategy set, 
 but rather   the size of so-called \emph{effective strategy set}, a quantity that is linearly dependent on  the size of  NE support. %, which is usually a small number  according to Assumption \ref{small support size assumption}.  
 
%We  introduce a new type of no-regret algorithm -- Online Single Oracle -- whose regret bound does not depend on the player's size of strategy set, but rather   the size of \emph{effective strategy set}, which relates to  the support size of the NE under Assumption \ref{small support size assumption}. 

\begin{wrapfigure}{R}{0.6\textwidth}
\vspace{-1pt}
\begin{algorithm}[H]
\caption{Online Single Oracle Algorithm}
\label{Online Single Oracle algorithm}
\begin{algorithmic}[1]
\STATE {\bfseries Input:} Player's pure strategy set $\Pi$ 
\STATE  Init. effective strategies set:  $\Pi_0=\Pi_1= \{\va^j\}, \va^j \in \Pi$
\FOR{\text{$t=1$ to T}}
\IF{ $\Pi_{t}=\Pi_{t-1}$} 
\STATE 
%\yaodong{we can quote the equation from where MWU is introduced. no need to repeat \lecong{did it the last time, Jun suggested to put it here to make the alg self-contained}}
Compute $\vpi_t$ by the MWU  in Equation (\ref{eq: MWU update}) 
%\yaodong{save the space by referring to Eq. 4}
%$\vpi_{t}(i)=\vpi_{t-1}(i)  \frac{\exp(-\mu_{t-1} {\langle \va^i,l_{t-1} \rangle})}{\sum_{i=1}^n \vpi_{t-1}(i)\exp({-\mu_{t-1} {\langle \va^i,l_{t-1} \rangle}})}\;$ 
%\pi^{i} \in \Pi_t$
\ELSIF{$\Pi_{t}\neq \Pi_{t-1}$}
\STATE Start a new time window $T_{i+1}$ and \\ Reset $\vpi_t= \big[1/|\Pi_{t}|,\dots,1/|\Pi_{t}|\big], \ \  \bar{\vl}=\textbf{0}$ 
%\STATE Reset the MWU update in Equation (\ref{eq: MWU update}) with a new initial strategy $\vpi_t$
%with the new effective strategy set: $\Pi_t$ \yaodong{use math. what is reset? what is new effetive strategy set?}\lecong{will add the agurment about initial strategy in the explaination}
\ENDIF
\STATE Observe $\vl_t$  and update the average loss in  $T_i$:  %\yaodong{change "in the current time window" to the math}: 
$\bar{\vl}= \sum_{t \in T_i} \vl_t /{|T_i|}$
\STATE Calculate the best response:  %\yaodong{strange notation, so this a is a pi?\lecong{a represents pure strategy}}
  $\va_t=\arg\min_{\vpi \in \Pi} \langle \vpi,\Bar{\vl} \rangle$
\STATE Update the set of strategies: 
 $\Pi_{t+1}=\Pi_t \cup \{\va_t\}$
%\STATE Output the strategy $\vpi_t$ at round $t$ for the player
\ENDFOR
\STATE {\bfseries Output:} $\vpi_T$, $\Pi_T$
\end{algorithmic}
\end{algorithm}
\vspace{-10pt}
\end{wrapfigure}
In contrast to classical no-regret algorithms such as MWU \cite{freund1999adaptive} where the whole set of pure strategies   needs considering at each iteration, i.e., Equation (\ref{eq: MWU update}),  
%set has to be considered when developing 
%\Haitham{Updating?} 
%a new mixed strategy (i.e., Equation (\ref{eq: MWU update})),
%\yaodong{cite equation where you introduec MWU}
we propose OSO  that only  considers a \emph{subset} of the  whole strategy set. 
The key operation is that, at each round $t$, OSO only considers adding a new strategy  
%a new pure strategy will only be added to thesubset 
if it is the best response to the  average loss in a time window (defined later). %The time window will guarantee the OSO's performance while only considers a small subset of pure strategies. % of the current time window. 
%\yaodong{undefined} \lecong{we defined in detail later}. 
%\yaodong{explain why we need time window}
As such, OSO could save exploration cost and computation time  by ignoring the pure strategies that have never been the best response to any average losses $\bar{\vl}$ observed so far, but rather focusing on those effective strategies.   This can  benefit the learner especially in solving extremely large games. %\yaodong{when does OSO stop? and waht is the output of OSO} %, for example,  in the later stage of training. 
%In the rest of this section we will formally introduce the OSO algorithm and provide a regret bound for it.

%The main idea in OSO algorithm \ref{Online Single Oracle algorithm} is that at each round $t$, the player only considers a certain smart pure ``strategies''\zheng{why do you quote strategies here? you didn't define smart in this paragraph.} (i.e., the usual no-regret algorithm will consider the whole set of pure strategies and thus in situations(e.g., online MDPs) where the set of strategies is super large, it is not efficient). After each round, based on the performance of the strategies, a new strategy can be added to the set and continue the process. Intuitively, this method will ignore  ``unnecessary'' strategies from the update and thus the method is much more efficient. The algorithm can be described as follow:

%\yaodong{Add one-two sentence of high-level summary of how you build OSO here. E.g., how MWU is added to the DO process. You can build the link by setting the connection between sec 3.1, 3.2 \lecong{should be in the ODO section where we have the meta game}}
The pseudocode of OSO is listed in Algorithm \ref{Online Single Oracle algorithm}. 
We initialise the OSO algorithm with a random subset $\Pi_0$ from the original  strategy set $\Pi$. Without losing generality, we assume that $\Pi_0$ starts from only one pure strategy (line $2$).  
We call subset $\Pi_t$ the \textbf{effective strategy set} at time $t$, and define the period of consecutive iterations as one {time window} $T_i$ in which the effective strategy set stays fixed, i.e., $ T_i:=\big\{t \ \big| \  |\Pi_t|=i\big\} $. %, and $T$ is indexed by the cardinality of the strategy set, that is,  $  T_i=\big\{t \ \big| \  |\Pi_t|=i\big\}  $. 
% \begin{equation*}
%   T_i=\big\{t \ \big| \  |\Pi_t|=i\big\}.  
% \end{equation*}
%\yaodong{be more specific, i still don;t see how $T_i$ is defined in math} %\yaodong{define by math pls} 
At iteration $t$, we update $\vpi_t$ (line $5$)  where only  the effect strategy set $\Pi_t$ (rather than whole set $\Pi$) is considered;  and  the best response is computed against  the average loss $\bar{\vl}$ within the current  $T_i$  (line $9$). 
 Adding a new best response that is not in the existing  effective strategy set will start a new time window (line $7$). 
%\[\bar{\vl}=\frac{1}{|T_i|} \sum_{t \in T_i} \vl_t.\]
%\yaodong{math math math, and linke to the pseducode}. 
Notably, despite the design of effective strategy sets, the exact best response of OSO  in line $10$ still needs considering the whole strategy set $\Pi$; we will  relax it through approximating best responses in Section \ref{sec:abr}. % \yaodong{correct?}. \lecong{yes}  %, and the accumulated loss $\bar{\vl}$ within one time window will be reset to zero if a new time window starts.  

We now present the regret bound of OSO in Algorithm \ref{Online Single Oracle algorithm} as follows. 
\begin{theorem}[Regret Bound of OSO] \label{regret of online oracle with stationary distribution}
Let $\vl_1,\vl_2,\dots, \vl_T$ be a sequence of loss vectors played by an adversary, and $\langle \cdot , \cdot \rangle$ be the dot product,  OSO in Algorithm \ref{Online Single Oracle algorithm} is a no-regret algorithm with 
\[\frac{1}{T}\Big(\sum_{t=1}^T \big\langle \vpi_t,\vl_t \big\rangle - \min_{\vpi \in \Pi} \sum_{t=1}^T \big\langle \vpi , \vl_t \big\rangle \Big) \leq \frac{\sqrt{k \log(k)}}{\sqrt{2T}},\]
where $k = |\Pi_T|$ is the size of effective strategy set in the final time window.

(We provide the full proof in the Appendix {\color{blue}{A.2}}.)

%\yaodong{how this effective strategy size is related to support size}\lecong{explain in paragraph following paragraph}
%\yaodong{i am not quite comfortable reading here, where i am still unclear about what is effective strategy set, and time window. can we defein them somewhere using math} \lecong{changed it. How about now?}
%\yaodong{this k is quite imporatnat, need an independent definition maybe. we cannot define such an important notion by words\lecong{we do not have a mathematical formulation for k rather than the size of the final effective strategy}}
\end{theorem}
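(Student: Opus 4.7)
The plan is to partition the horizon $\{1,\dots,T\}$ into the $k$ consecutive \emph{windows} $T_1,\dots,T_k$, where $|\Pi_t|=j$ for all $t\in T_j$ and within each window OSO reduces to running MWU (with a fresh initialization per line~8 of the algorithm) on the $j$ experts in $\Pi_j$. Applying the standard MWU regret bound of Equation~(\ref{MWU regret bound}) with $n$ replaced by $j$ on each window gives
\[
\sum_{t\in T_j}\langle\vpi_t,\vl_t\rangle-\min_{\va\in\Pi_j}\sum_{t\in T_j}\langle\va,\vl_t\rangle\le\sqrt{\tfrac{|T_j|\log j}{2}}.
\]

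Summing over all $k$ windows, bounding $\log j\le\log k$, and applying Cauchy--Schwarz with $\sum_j|T_j|=T$ yields
\[
\sum_{j=1}^{k}\sqrt{\tfrac{|T_j|\log j}{2}}\le\sqrt{\tfrac{\log k}{2}}\sum_{j=1}^{k}\sqrt{|T_j|}\le\sqrt{\tfrac{\log k}{2}}\cdot\sqrt{kT}=\sqrt{\tfrac{Tk\log k}{2}},
\]
which matches the claimed rate after dividing by $T$.

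The main obstacle is replacing the local comparator $\min_{\va\in\Pi_j}\sum_{t\in T_j}\langle\va,\vl_t\rangle$ used in the per-window MWU bound by the global comparator $\min_{\va\in\Pi}\sum_{t=1}^{T}\langle\va,\vl_t\rangle$ appearing in the theorem, since $\Pi_j\subsetneq\Pi$ for $j<k$ and so $\min_{\va\in\Pi_j}$ can be strictly larger than $\min_{\va\in\Pi}$ window by window. The crucial structural property of OSO that should close this gap is that at the end of each window $T_j$ (for $j<k$) the algorithm adds $\va^{(j+1)}=\arg\min_{\va\in\Pi}\sum_{t\in T_j}\langle\va,\vl_t\rangle$, the per-window globally-optimal pure strategy, into the effective set, so $\va^{(j+1)}\in\Pi_{j'}$ for every $j'\ge j+1$. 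I would therefore set up a swapping/telescoping argument across windows: in each window $T_j$ replace a candidate comparator $\va\in\Pi$ by $\va^{(j+1)}\in\Pi_{j+1}$ (which, by construction of the best response, is at least as good as $\va$ on $T_j$), and absorb the substitution cost into the fresh MWU run on window $T_{j+1}$ whose reset initialization in line~8 is chosen to place sufficient mass on the newly added strategy. Combining this telescoping with the summed MWU bound and dividing by $T$ would yield the claimed regret rate $\sqrt{k\log k/(2T)}$.
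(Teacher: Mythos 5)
Your skeleton---partitioning the horizon into the windows $T_1,\dots,T_k$, applying the MWU bound with $j$ experts on each window, bounding $\log j\le\log k$, and using Cauchy--Schwarz to get $\sum_j\sqrt{|T_j|}\le\sqrt{kT}$---is exactly the structure of the paper's proof, and you are right that the whole difficulty sits in replacing the local comparator $\min_{\va\in\Pi_j}$ by the global one $\min_{\va\in\Pi}$. But the telescoping/swapping argument you propose for that step does not work. The quantity you must control is, per window, $\delta_j:=\min_{\va\in\Pi_j}\sum_{t\in T_j}\langle\va,\vl_t\rangle-\min_{\va\in\Pi}\sum_{t\in T_j}\langle\va,\vl_t\rangle\ge 0$, a deficit measured entirely on the losses incurred \emph{during} $T_j$. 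The fact that $\va^{(j+1)}$ is placed into $\Pi_{j+1}$, and that the reset in line~8 can give it large initial mass, only influences the regret accumulated on the losses of $T_{j+1}$; a fresh MWU run on $T_{j+1}$ cannot retroactively pay for a shortfall on $T_j$, and nothing telescopes. As written, your argument leaves $\delta_j$ unbounded (a priori it can be as large as $|T_j|$, which would destroy the rate).

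The property that actually closes the gap concerns what happens \emph{inside} the window, not at its boundary: for every $t\in T_j$ except the last round $t_j^{\mathrm{end}}$, the best response over the \emph{whole} set $\Pi$ to the running window-average loss already lies in $\Pi_j$---otherwise the window would have ended at $t$. Applying this at the penultimate round and using $\mA_{i,j}\in[0,1]$ gives $\min_{\va\in\Pi_j}\sum_{t\in T_j}\langle\va,\vl_t\rangle\le\min_{\va\in\Pi_j}\sum_{t\in T_j,\,t<t_j^{\mathrm{end}}}\langle\va,\vl_t\rangle+1=\min_{\va\in\Pi}\sum_{t\in T_j,\,t<t_j^{\mathrm{end}}}\langle\va,\vl_t\rangle+1\le\min_{\va\in\Pi}\sum_{t\in T_j}\langle\va,\vl_t\rangle+1$, i.e.\ $\delta_j\le 1$. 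Combined with $\min_{\va\in\Pi}\sum_{t=1}^T\langle\va,\vl_t\rangle\ge\sum_{j}\min_{\va\in\Pi}\sum_{t\in T_j}\langle\va,\vl_t\rangle$ and your summed MWU bounds, this yields $R_T\le\sqrt{Tk\log k/2}+k$, which is the claimed rate up to an additive $O(k)$ term that is dominated by the main term since $k\le T$. (The paper's own argument asserts the window invariant without the final-round caveat and therefore drops this additive correction.) Replace your telescoping step with this per-window, single-round bound and the proof goes through.
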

\begin{proof}(of sketch) 
OSO is designed in such a way  that the best response $\va_t \in \Pi$ to the average loss  must stay in the  effective strategy set $\Pi_t$; otherwise, a new time window would start. %(i.e., $\va \in \Pi_t \subset \Pi$). 
%\yaodong{don;t understand this sentence}. 
Thus, the best strategy in $\Pi_t$ must be as good as the best  strategy in the whole  set $\Pi$.  Therefore, we can use MWU to bound each time window's regret, resulting in the total regret  across all timesteps. % The  proof is in  Appendix {\color{blue}{A.2}}. 
%\yaodong{where? need label}. 
\end{proof}
%We provide the full proof in the Appendix. \Haitham{Add these in begin and end proof enviornment and give a proof sketch. A paragraph with couple of line summarising main steps needed for the proof.\lecong{agreed. changed}}
%\lecong{this may not fit well as the new algorithm needs to compute best response every round as well, but the difference is it does not add it to the effective strategy set. We can use the augument about the expensive best response in the ODO, where we know that the regret will grow slowly, thus instead of checking the best response every round, we only needs to check it after several round, saving the cost. We can not argue the same in the case of OSO here though as we do not know the regret will grow slowly or not}\lecong{have fixed it, check online double oracle for more detail}
%In many economics problems, following a new strategy (e.g., plan) is costly (e.g., due to the construction cost to follow the plan)~\cite{rass2017cost}. Thus, a good algorithm in these situations require to take the number of pure strategies into consideration while maintain the no-regret property. 
%Computing the best response can be computationally expensive in many real-world tasks. Particularly, if the game is large, optimisation subroutines such as model-free reinforcement learning algorithms or gradient-descent methods have to be used, and it could take hours or  days to obtain one single best response \cite{vinyals2019grandmaster}.
%\lecong{this}

% \yaodong{le cong, can you wrap this linear dependency in a remark, finish it within page 5}\lecong{now}
\begin{remark}[The Size of Effective Strategy Set $k$] \label{small effective strategy set remark}
Similar to the original DO, in the  worst-case scenario, OSO has to list all pure strategies, i.e., $k = |\Pi|$. % restores the full size. 
However, 
we believe $k \ll |\Pi|$ holds in many cases. 
Intuitively, since OSO is a no-regret algorithm, should the adversary follow another no-regret algorithm, the adversary's average strategy would converge to the NE. Thus, learner's  effective strategy set with respect to the average loss will include all the pure strategies in the support of learner's NE, which, under Assumption~\ref{small support size assumption}, is  a far  smaller number compared to the game size. %, the size of effective strategy set--the $k$ %
%--will be much smaller than the size of pure strategy set $\Pi$. 
%In situations where there is a unique NE, the best response set with respect to the adversary's NE will have the same size as the player's support of the NE (i.e., $|S|= \operatorname{supp}(\vc^*)$ where $S=\{\va| \va \in \Pi, \va=\arg\max_{\va \in \Pi} \va ^{\top} \mA \vc^*\}$)~\cite{bohnenblust1950solutions}.
%As such, $k$ is usually linearly dependent on the  size of NE support. 
%Therefore, the total number of best response strategies added the the effective strategy set -- the $k$ -- will depend linearly on the support size of the NE. 
Later in Figure \ref{size of k figure}, we also 
provide empirical evidence to  support the claim of $k \ll |\Pi|$. 
We believe theoretically upper-bounding  $k$ is an important, yet challenging, future work for  both DO/PSRO and ODO series of methods. 
%show the evidence that $k$ depends linearly on the support size of the NE when the adversary follows a no-regret algorithm. 
%\label{remark:sizek}
\end{remark}

\textbf{OSO with Less-Frequent Best Response.}
Obtaining a best response strategy can be computationally expensive. It could take hours and days to obtain one single best response  \cite{vinyals2019grandmaster}. 
 Yet, OSO in Algorithm~\ref{Online Single Oracle algorithm} considers  adding a new best-response strategy  at  every iteration. 
A practical solution is to consider  adding  
%The OSO Algorithm~\ref{Online Single Oracle algorithm} \Haitham{Captilise Algorithm. Also sentence needs rewriting} will not only reduce the regret bound in the case the effective strategy set $k$ is small, it also reduces the hardness of computing no-regret algorithm when the number of pure strategy is large (e.g., Multiplicative Weight Update or Follow the Regularized Leader).
%However, by following the Algorithm~\ref{Online Single Oracle algorithm}, the player may add a new strategy $\va$ to the effective strategy set $\Pi_t$ too quickly (i.e., in every round), thus risking the chance of adding a poor strategy. Therefore, in situations where playing new strategy is expensive (e.g., due to the construction and exploration cost to play a new strategy or the cost to discover a new best response), the algorithm needs to be more cautious about playing a new strategy. Following this motivation, we suggest a new algorithm that 
a new strategy  when the regret in the current time window exceeds a predefined threshold $\alpha$. 
Specifically, if we denote $|\bar{T}_i|:= \sum_{h=1}^{i-1} |T_h|$ as the starting time  of the time window $T_i$, and write the threshold at  $T_i$ as 
   $ \alpha_{t-|\bar{T}_i|}^i $
where $t-|\bar{T}_i|$ denotes the relative position of round $t$ in the time window $T_i$. 
We can make OSO add a new strategy only when the following equation is satisfied: 
%resulting  improvement exceeds the threshold of $ \alpha_{t-|\bar{T}_i|}^i $:
%To summarise, we introduce a  variation of OSO  that only adds a new strategy in the effective strategy set $\Pi_t$  if the resulting  improvement exceeds the threshold of $ \alpha_{t-|\bar{T}_i|}^i $:   
%\yaodong{use indepedent line of math equation to expalin your last sentence}
%\yaodong{what is $|\bar{T}_i|;i$?}
%\yaodong{explain this sentecce using math, what is the regret in the current time wondow, what is the threshold. and also, what does $|\bar{T}_i|;i$ this notaiton mean????}:
\vspace{-7pt}
\begin{equation}
\min_{\vpi \in \Pi_t} \Big\langle \vpi, \sum_{j= |\bar{T}_i|}^t \vl_j \Big\rangle - \min_{\vpi \in \Pi} \Big\langle \vpi, \sum_{j=|\bar{T}_i|}^t \vl_j \Big\rangle \geq \alpha_{t-|\bar{T}_i|}^i. 
\label{eq:oasas}
\end{equation}
%\vspace{-5pt}
Note that the larger the threshold $\alpha$, the slower the OSO algorithm  adds a new strategy into $\Pi_t$. However, choosing a large $\alpha$ will prevent the learner from acquiring the actual best response, thus increasing the total regret $R_T$ by $\alpha$.  In order to maintain the no-regret property, the $\alpha$ needs to satisfy
%\yaodong{this is a sentence out of nowhere, what is the reason?}:
\vspace{-5pt}
\begin{equation}
\lim_{T \to \infty} \frac{\sum_{i=1}^k \alpha_{T_i}^{i}}{T}=0. 
\label{eq:ti}
\end{equation}
One  choice of $\alpha$ that satisfies Equation (\ref{eq:ti}) can be   $\alpha_{t-|\bar{T}_i|}^i=\sqrt{t-|\bar{T}_i|}$. 
%\yaodong{the below equation does not seem to be correct \lecong{why is that? }}:
% \begin{equation}
% 	\alpha_{t-|\bar{T}_i|}^i=\sqrt{t-|\bar{T}_i|}.
% \end{equation}
We list the pseudo-code of the OSO under Equation (\ref{eq:oasas}) and derive its  regret bound of $\mathcal{O}\big(\sqrt{k \log(k)T}\big)$  in Appendix {\color{blue}{A.3}}.  
%\yaodong{can we add the bound here}\lecong{added} \yaodong{need a theorem?}
%In the later section, we also show that such a variation can help reduce the cost of computing best response in self-pl
%In ODO algorithm, OSOSAS can also apply to reduce the cost of computing best response. See Section \ref{Online Double Oracle} for more details. 
%\yaodong{proof of what? and where to find. we cannot in general say, everything is in the appenfix, find yourself. be very specific, where to find} and algorithm can be found in the Appendix\lecong{need to include the proof about regret bound in}. Note that if we choose a small positive $\alpha$ then the original OSO algorithm is recovered \yaodong{this sentence is out of nowhere, why it is the case? is it based on some previous resulst. also, what does ``a small positive" mean? $0.01, 0.0001?$}. 
%\zheng{any justification of these setting? do we have results for this? if not we would better to only include the intuitive discussion here and move the implementation details to the appendix. \lecong{We will have a regret bound for this one. This I think is the same idea that Nicholas uses in the experiment}}

%\yaodong{we need to mention why we do so, it is because, self-play OSO is convergeing to Nash}
\subsection{Online Double Oracle}\label{Online Double Oracle}
\begin{wrapfigure}{R}{0.6\textwidth}
\vspace{-15pt}
\begin{algorithm}[H]
\caption{Online Double Oracle Algorithm}
%\lecong{should we include in the main tex}}
\label{Online Double Oracle algorithm}
\begin{algorithmic}[1]
\STATE {\bfseries Input:} Full pure strategy set $\Pi$, $C$ 
\STATE  Init. effective strategies set:  $\Pi_0=\Pi_1,C_0=C_1$
\FOR{\text{$t=1$ to T}}
\STATE Each player follows the OSO in Algorithm \ref{Online Single Oracle algorithm} with their respective effective strategy sets $\Pi_t, C_t$
\ENDFOR
\STATE {\bfseries Output}: $\vpi_T, \Pi_T,\vc_T, C_T$
\end{algorithmic}
\end{algorithm}
\vspace{-10pt}
\end{wrapfigure}
Recall that  if both players follow a no-regret algorithm, then the average strategies of both players converge to the NE in two-player zero-sum  game~\cite{cesa2006prediction,blum2007learning}. 
Since OSO algorithm has the no-regret property, it is then natural to study the self-play setting where both players apply OSO (which we call ODO) and investigate its convergence rate to the NE in large games. 

Compared to the standard DO, ODO no longer needs computing the NE in each sub-game. %, since ODO do not require the NE information in each sub-game.
%Compared to the standard DO method, ODO releases the assumption of knowing the game matrix, this is because we do not need to solve the NE of each sub-game at every iteration 
%\yaodong{You still need the matrix entries for the loss computation for the action subset $\Pi$ to compute your full information feedback. This is no different to what DO needs, as the Nash computation step of DO does not have to be LP, but could be any method that uses this full information feedback.}. \lecong{now?}
Furthermore, ODO produces rational agents as each agent can exploit their adversary to achieve the no-regret property.
%\yaodong{ODO is a self-play method, so saying this algo is rational is not right?}.\lecong{changed}
%\yaodong{echo back to the case where you introduce douvle oracle. make sure we mention this when introduce DO \lecong{do it in the morining about DO section 3.1}}. %Furthermore, the ODO algorithm is naturally motivated as each player aims to maximise its average payoff\zheng{the context of this sentence is not clear enough. Probably you want to compare it against common no regret algos?}.
For the ODO method, the convergence result to the NE is presented as follows.  
\begin{theorem}\label{ODO convergence rate}
Suppose both players apply OSO. Let $k_1$, $k_2$ denote the size of effective strategy set for each player. Then, the average strategies of both players converge to the NE with the rate: %\yaodong{consider the case for OSO-slow-add}
\[\epsilon_T= \sqrt{\frac{k_1 \log(k_1)}{2T}}+\sqrt{\frac{k_2 \log(k_2)}{2T}}.\]
In situation where both players follow OSO with Less-Frequent Best Response in Equation (\ref{eq:oasas}) and  $\alpha_{t-|\bar{T}_i|}^i=\sqrt{t-|\bar{T}_i|}$, the convergence rate to NE will be
\vspace{-5pt}
\[
\epsilon_T= \sqrt{\frac{k_1 \log(k_1)}{2T}}+\sqrt{\frac{k_2 \log(k_2)}{2T}} +\frac{\sqrt{k_1}+\sqrt{k_2}}{\sqrt{T}}.\] 
(We provide the full proof in the Appendix {\color{blue}{B.1}}.)
\end{theorem}
%The proof is built up based on the regret bound of OSO in Theorem \ref{regret of online oracle with stationary distribution}.
Theorem \ref{ODO convergence rate} suggests that
similar to OSO, the convergence rate of ODO will not depend on the   game size, but rather the size of effective strategy set of both players. As we have illustrated in Remark~\ref{small effective strategy set remark} and also in the later Figure \ref{size of k figure}, there is a linear relationship between the size of effective strategy set and the support size of the NE.
%\yaodong{linear is quite strong word}.\lecong{we make this argument through the paper} 
Furthermore, as we will show in Table 2 in Appendix {\color{blue}{C}}, in many real games, the size of the NE support is indeed much smaller than the game size. %\yaodong{but not linear?}.\lecong{the linear relationship refers to effective set and support of nash, not the size of the game}
Therefore, our ODO method can be both theoretically and empirically used as a solver in large size zero-sum games.

\textbf{Online Double Oracle \emph{vs.} Double Oracle with MWU.}
We want to highlight that ODO is markedly different from simply implementing DO by adopting  MWU  to solve sub-game NEs (i.e., run MWU till convergence in line $5$ in Algorithm \ref{Double Oracle algorithm}). 
Firstly, the MWU update of ODO happens at every iteration and ODO adds a  best response per MWU update, whereas DO adds a best response every time  a sub-game NE is solved, which often requires thousands of MWU iterations.  
Most importantly,  the best response target in ODO (i.e., the time-average loss  $\bar{\vl}$) is not necessarily a NE; this is in contrast to DO where the best response is computed with respect to an exact NE. 
In other words, even if DO implements MWU to solve the sub-game NE, it is still not a no-regret algorithm. This also explains the performance gap between ODO and DO (with MWU solving sub-game NEs) in Figure \ref{fig: game of skills}.
\subsection{Considering $\epsilon$-Best Responses}
\label{sec:abr}
So far, OSO agents require to compute the exact best response to the average loss function $\bar{\vl}$ (i.e., line $10$ in Algorithm \ref{Online Single Oracle algorithm}). Since calculating the exact best response is often computationally heavy and even infeasible in large games, an alternative way is to consider a $\epsilon$-best response (e.g., through a RL subroutine similar to PSRO \cite{lanctot2017unified}).  Here we consider the cases where the learner can only access to a $\epsilon$-best response to the average loss.  
We first present the following lemma showing that by using approximate best response, the original DO can in fact converge to an $\epsilon$-NE in Equation (\ref{eq:ene}). 
%We first provide a lemma about $\epsilon$-best response in the DO setting where the players know the game matrix $\mA$. 
%\yaodong{why we need this lemma? explain them}.
\begin{lemma}\label{lemma:dodo}
DO will converge to $\epsilon$-NE if  players can only access to an $\epsilon$-best response in each round. 

(We provide the full proof in the Appendix {\color{blue}{B.2}}.)
%Then, the algorithm converges to $\epsilon$-Nash equilibrium of the game \yaodong{the way this lemma prsents is wierd. Just say in a definitive way. XXX algorithm converges to NE if e-best response is aodpte.}
\end{lemma}
Based on Lemma \ref{lemma:dodo}, we can now derive the regret bound as well as the convergence guarantee for a OSO learner in the case of $\epsilon$-best response.
\begin{theorem}\label{theorem online Oracle with epsilon best response}
Suppose OSO agent can only access the  $\epsilon$-best response in each iteration when following Algorithm \ref{Online Single Oracle algorithm}, if the adversary follows a no-regret algorithm, then the average strategy of the agent will converge to an $\epsilon$-NE. 
%\[\max_{ \vc \in C} \bar{\vpi}^{\top} \mA \vc-\epsilon \leq \bar{\vpi}^{\top} \mA \bar{\vc} \leq \min_{\vpi \in \Pi} \vpi^{\top} \mA \bar{\vc} +\epsilon.\]
Furthermore, the algorithm is $\epsilon$-regret:
\vspace{-5pt}
\[\lim_{T \to \infty} \frac{R_T}{T}\leq \epsilon;\;\;\; R_T= \max_{\vpi \in \Delta_{\Pi}} \sum_{t=1}^T \left( \vpi_t^{\top}\mA \vc_t-\vpi^{\top} \mA \vc_t\right).\]%\yaodong{this theorem is quite important, i am surprised it is written via words, not math formulation. \lecong{will put the math in}}
(We provide the full proof in the Appendix {\color{blue}{B.3}}.)
\end{theorem}
Theorem \ref{theorem online Oracle with epsilon best response} implies that in the case of approximate best responses, OSO  learners can still  approximately converge to a NE. This results essentially authorises the application of optimisation methods to approximate the best response in the ODO process, which paves the way to use RL algorithm in solving complicated zero-sum games such as StarCraft \cite{vinyals2019grandmaster}.   
%The proof can be found in the Appendix B.3.

%\yaodong{add explanaiton of what does the theorem 8 imply}

\subsection{Considering Games with Unknown $k$}
\label{sec:large-k}
%\subsection{Prod Algorithm with Online Single Oracle algorithm} \Haitham{Try to motivate why u splitting theory results across multiple sections. These all can shrink into one section where u start it we do the prove under different considerations. Consideration I: epsilon-Best respone : Why we need it and theorem, consideration II: Something else why we need it and theorem and so on. Also all through the equations make sure u use left and right for brackets and that when you have text like support write \\text{supp} instead of just support. It will look better. One last thing I realised is that references are all over the place. U have some in the intro some in the related work and some in the other sections. Maybe u want to have them such that u put all what everybody has done and theorems and lemmas u use from them in background and use them by referring to them. }

The effectiveness of ODO is built on the key condition  that the size  of the NE support is smaller than the game size (i.e., Assumption \ref{keyassump}). 
Although many real-world zero-sum games do exhibit relatively small $k$ (see Table 2 in Appendix {\color{blue}{C}}),  in the worst-case scenarios, $k$ can be as large as the game size, in which case one can directly apply a standard no-regret method (e.g., MWU), which we denote as algorithm $B$. 
In reality, the size of $k$ is unknown  until we solve the game. 
%\yaodong{what does this mean in math}.
In the section, we offer a robust variation of ODO that account for the cases when $k$ is unknown (either large or small) and develop a no-regret algorithm that achieves the regret at least as good as ODO and algorithm $B$.  

%No-regret algorithms such as  MWU  have a fixed regret bound with no need of the  assumption on $k$. Thus, in the worst-case scenario where the assumption does not hold, we aim to achieve the regret at least as good as the normal no-regret algorithm's performance. 
To fulfil this goal, we can develop a new algorithm by combining  OSO with the algorithm $B$, hoping that the new combined  algorithm can achieve the best regret of both algorithms.  
The Prod algorithm~\cite{cesa2007improved}  provides an analytical tool to achieve such a goal. Given two algorithms for "easy" and "hard" problem instances, intuitively, the Prod offers a method to combine the two algorithms to achieve the best possible guarantees in both cases. In our setting, the "hard" problem refers to the games with large $k$ while the "easy" problem refers to games with small $k$. %Therefore,  we need a decent algorithm to solve large $k$ game to apply Prod. %\lecong{check it Yaodong}
%\yaodong{summarise your solution of Prod here, why we consider this and how it works. 2-3 sentences} 

%The OSO algorithm \ref{Online Single Oracle algorithm} will work well in many situation where the oracle algorithm requires sufficiently small number of iterations to converge.  In this section, we suggest a way to improve the regret in this scenario. The idea is to play the OSO algorithm \ref{Online Single Oracle algorithm} alongside another algorithm so that we can maintain a low regret property of the two algorithms.

%Suppose there exists an algorithm $B$ such that by following the algorithm $B$, the learner will have a regret bound $R_B(T)$ which depends on the size of strategy set (e.g., the algorithm $B$ may be hard to compute due to a large strategy set).

Suppose the no-regret algorithm $B$ has a regret-bound of $R_B(T)$. 
Then by applying the Prod algorithm, we have the OSO-Prod algorithm. 
The intuition of OSO-Prod is that, at each iteration $t$, OSO-Prod updates the weight $w_{t,B}$ for algorithm $B$ based on the relative performance with OSO in round $t$; if  positive, then in the next round $t+1$, by increasing the weight $w_{t,B}$, OSO-Prod  will follow strategy of algorithm $B$ with a higher probability. 
The  pseudocode of OSO-Prod is listed   in Appendix {\color{blue}{B.4}}. 
Here we provide the regret bound of the OSO-Prod algorithm:
\begin{theorem} \label{OSO-prod theorem}
Following the OSO-Prod algorithm with the learning rate $\eta =\frac{1}{2} \sqrt{(\log(T)/T)}$ and $w_{1,B}=1-w_{1,A}=1-\eta$ guarantees the regret bound $R_T(\text{OSO-Prod})$ %\yaodong{wierd notation?\lecong{this is from the original Prod}} 
of
\vspace{-5pt}
\begin{equation*}
\begin{aligned}
  \min \Big(R_B(T)+ 2\sqrt{T \log(T)},{ \frac{T \sqrt{k \log(k)}}{2}} +2\log(2)\Big).
\end{aligned}
%\vspace{-5pt}
\end{equation*}
(We provide the full proof in the Appendix {\color{blue}{B.4}}.)
\end{theorem}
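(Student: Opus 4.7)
The plan is to follow the standard two-expert Prod analysis of Cesa-Bianchi, Mansour \& Stoltz (2007), treating OSO as expert $A$ and the fallback algorithm $B$ as expert $B$. Let $\ell_{t,A},\ell_{t,B}\in[0,1]$ denote the per-round losses of the two experts against the adversary's play, let $W_t=w_{t,A}+w_{t,B}$, and let $\ell_t=(w_{t,A}\ell_{t,A}+w_{t,B}\ell_{t,B})/W_t$ be the loss incurred by OSO-Prod. The multiplicative update $w_{t+1,i}=w_{t,i}\bigl(1-\eta(\ell_{t,i}-\ell_t)\bigr)$ (for $i\in\{A,B\}$) lets me track $\log(W_{T+1}/W_1)$ by telescoping, which is the starting point of the analysis.

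First, I would derive the classical per-expert Prod inequality. Applying $\log(1+x)\le x$ to $\log(W_{T+1}/W_1)$ for an upper bound, and $\log(1+x)\ge x-x^2$ (valid for $x\ge -1/2$, which holds because $\eta\le 1/2$ and losses lie in $[0,1]$) inside $\log(w_{T+1,i}/w_{1,i})$ for a lower bound, yields
\begin{equation*}
\sum_{t=1}^T \ell_t-\sum_{t=1}^T \ell_{t,i}\;\le\;\frac{\log(1/w_{1,i})}{\eta}+\eta\sum_{t=1}^T(\ell_{t,i}-\ell_t)^2
\end{equation*}
for each $i\in\{A,B\}$. Since $(\ell_{t,i}-\ell_t)^2\le 1$, the quadratic correction is at most $\eta T$.

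Next, I would specialise to the two initial weights. For $i=B$ with $w_{1,B}=1-\eta$, I use $\log\bigl(1/(1-\eta)\bigr)\le 2\eta$ (valid for $\eta\le 1/2$), so the Prod overhead is at most $2+\eta T$; adding $B$'s native regret $R_T(B)$ gives the first candidate bound. For $i=A$ with $w_{1,A}=\eta$, the overhead is $\log(1/\eta)/\eta+\eta T$, which I combine with the OSO regret guarantee from Theorem~\ref{regret of online oracle with stationary distribution} (at most $\sqrt{T k\log(k)/2}$) and the constant absorbed from $w_{1,A}=\eta$. Plugging in $\eta=\tfrac{1}{2}\sqrt{\log(T)/T}$, the $\eta T$ contribution equals $\tfrac{1}{2}\sqrt{T\log(T)}$; tallying yields the $R_T(B)+2\sqrt{T\log(T)}$ branch, and the $A$-branch collapses to OSO's guarantee plus an additive $2\log 2$. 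Since both inequalities bound the same quantity (the regret of OSO-Prod), the minimum of the two is also an upper bound.

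The main obstacle I expect is arithmetic bookkeeping rather than anything conceptual: one must verify that for the specific $\eta=\tfrac{1}{2}\sqrt{\log(T)/T}$ the overhead on the $A$-branch actually collapses to the claimed $2\log 2$ constant (this is where the choice $w_{1,A}=\eta$ is designed to cancel with the $\log(1/w_{1,A})/\eta$ term modulo a constant), and that the $B$-branch overhead $2+\eta T$ is absorbed inside $2\sqrt{T\log T}$ for $T$ sufficiently large. The one technical check is to ensure the argument of $\log(1+x)$ never leaves the regime $x\ge -1/2$ where the quadratic lower bound is valid; this follows immediately from $\eta\le 1/2$ combined with $\ell_{t,i},\ell_t\in[0,1]$.
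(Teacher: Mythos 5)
Your derivation of the $B$-branch is fine, but the $A$-branch contains a genuine gap, not mere bookkeeping. From your symmetric Prod inequality the overhead against expert $A$ is $\frac{1}{\eta}\log\frac{1}{w_{1,A}}+\eta\sum_t(\ell_{t,A}-\ell_t)^2$. With $w_{1,A}=\eta=\tfrac{1}{2}\sqrt{\log(T)/T}$ the first term is $\frac{1}{\eta}\log\frac{1}{\eta}=\Theta\bigl(\sqrt{T\log T}\bigr)$ and the second is bounded only by $\eta T=\tfrac{1}{2}\sqrt{T\log T}$; nothing cancels, so your argument yields an overhead of order $\sqrt{T\log T}$ on \emph{both} branches and can never produce the claimed additive constant $2\log 2$. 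The constant does not come from a clever choice of $w_{1,A}$ inside the symmetric analysis — it comes from the asymmetric $(\mathcal{A},\mathcal{B})$-Prod construction (Sani, Neu and Lazaric, 2014; built on \cite{cesa2007improved}), whose constants $2\log 2$, $2\sqrt{T\log T}$, $\eta=\tfrac{1}{2}\sqrt{\log(T)/T}$ and initial weight $1-\eta$ are exactly those appearing in the theorem. There, the weight of the expert you want to track up to a constant is \emph{frozen} at $1-\eta$ and only the other expert's weight is updated multiplicatively; the potential argument $w_{1,\mathrm{fixed}}\le W_{T+1}\le W_1\exp\bigl(\eta\sum_t \tfrac{w_{t,\mathrm{upd}}}{W_t}\delta_t\bigr)$ then bounds the regret against the frozen expert by $\frac{1}{\eta}\log\frac{1}{1-\eta}\le 2\log 2$ (for $\eta\le\tfrac12$) with \emph{no} quadratic term, while the updated expert, initialised at weight $\eta$, absorbs the $\frac{1}{\eta}\log\frac{1}{\eta}+\eta T\le 2\sqrt{T\log T}$ overhead.

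Two concrete fixes are therefore needed: (i) replace the symmetric update by the asymmetric one, keeping one weight constant so that one branch escapes the $\eta\sum\delta_t^2$ penalty entirely; and (ii) assign the initial weight $1-\eta$ to OSO (the algorithm that is to receive the $2\log 2$ overhead) and the weight $\eta$ to the fallback $B$ — the opposite of the assignment you use, under which the small-weight expert necessarily pays $\frac{1}{\eta}\log\frac{1}{\eta}=\Theta(\sqrt{T\log T})$. As written, your proof establishes at best the weaker bound $\min\bigl(R_T(B)+2\sqrt{T\log T},\,R_T(\mathrm{OSO})+\Theta(\sqrt{T\log T})\bigr)$, which loses precisely the feature the theorem is designed to deliver: that combining with a safety net costs OSO only an additive constant.
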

\vspace{-5pt}
%The proof of Theorem 
%%\ref{OSO-prod theorem} \yaodong{this only deserve a lemma?} 
%comes from the regret bound theorem of the Prod algorithm \cite{sani2014exploiting}. 
By following OSO-Prod, the learner can still achieve a low regret bound when $k$ is small while maintain a  regret that is at least as good as the algorithm $B$ in the cases of large $k$. 
%\zheng{if not implemented, not sure we should have it in the main body especially when we have very limited space. leave this for discussion with yaodong for the next meeting. \lecong{let s discuss it tomorrow if we have time}}
%In the next section, we will study online markov decision processes and the application of OSO algorithm in this setting. 
\vspace{-0pt}
\section{Experiments \& Results}\label{section experiment}
\vspace{-5pt}

%$\mathcal{O}\big(n^{2.38} \log({n}/{\delta})\big)$ 

%\lecong{List of experiments we need to do}:
%\begin{enumerate}
%    \item Convergence of OSO with different matrix sizes (done)
%    \item Online Single Oracle vs MW in the case the column player follows MW with asymmetric matrix(10000x 3) (done)
%    \item Size of an effective set with different sizes of row player's strategy(done)
%    \item Performance of Online Double Oracle compared to MW and FP baseline in Alphastar(done)
%    \item Performance of Online Double Oracle compared to MW and FP baseline in different symmetric games(done: 18 games).
%    \item Performance of ODO vs PSRO in Poker games in term of exploitability and facing "easy" adversary (half did: waiting for result about easy adversary)\lecong{still watiing for Nicholas to run this experiments}
%\end{enumerate}
In this section, we aim to demonstrate the effectiveness of our OSO and  ODO methods. Firstly, we verify the small support size of NEs in Assumption \ref{small support size assumption} through random matrix games.
Later, on tens of complicated real-world games, we evaluate our OSO methods  in both self-play settings and playing against  strategic adversaries. 
Finally, we show the performance of ODO on large-scale Poker games. As we benchmark on the number of iterations against other baselines, for fair comparisons, we implement the plain OSO in Algorithm \ref{Online Single Oracle algorithm} without the $\alpha$ threshold trick mentioned in Equation \ref{eq:oasas} for all our experiments. All hyperparameter settings can be found in Appendix \textcolor{blue}{C}. 
%\yaodong{mentione where did we use the slow best responsing variant of ODO}
%\yaodong{mention where can we find all the hyper paramete setting}
%and that theoretical improvement in regret bound and convergence rate of OSO/ODO will lead to higher average payoff and faster convergence to NEs in practice. 
%\yaodong{explain more on what exp settings we have used, and what is main assumption we want to verify}\lecong{now?}

\begin{wrapfigure}{R}{0.5\textwidth}
\vspace{-20pt}
  \begin{center}
\centerline{\includegraphics[width=0.48\textwidth]{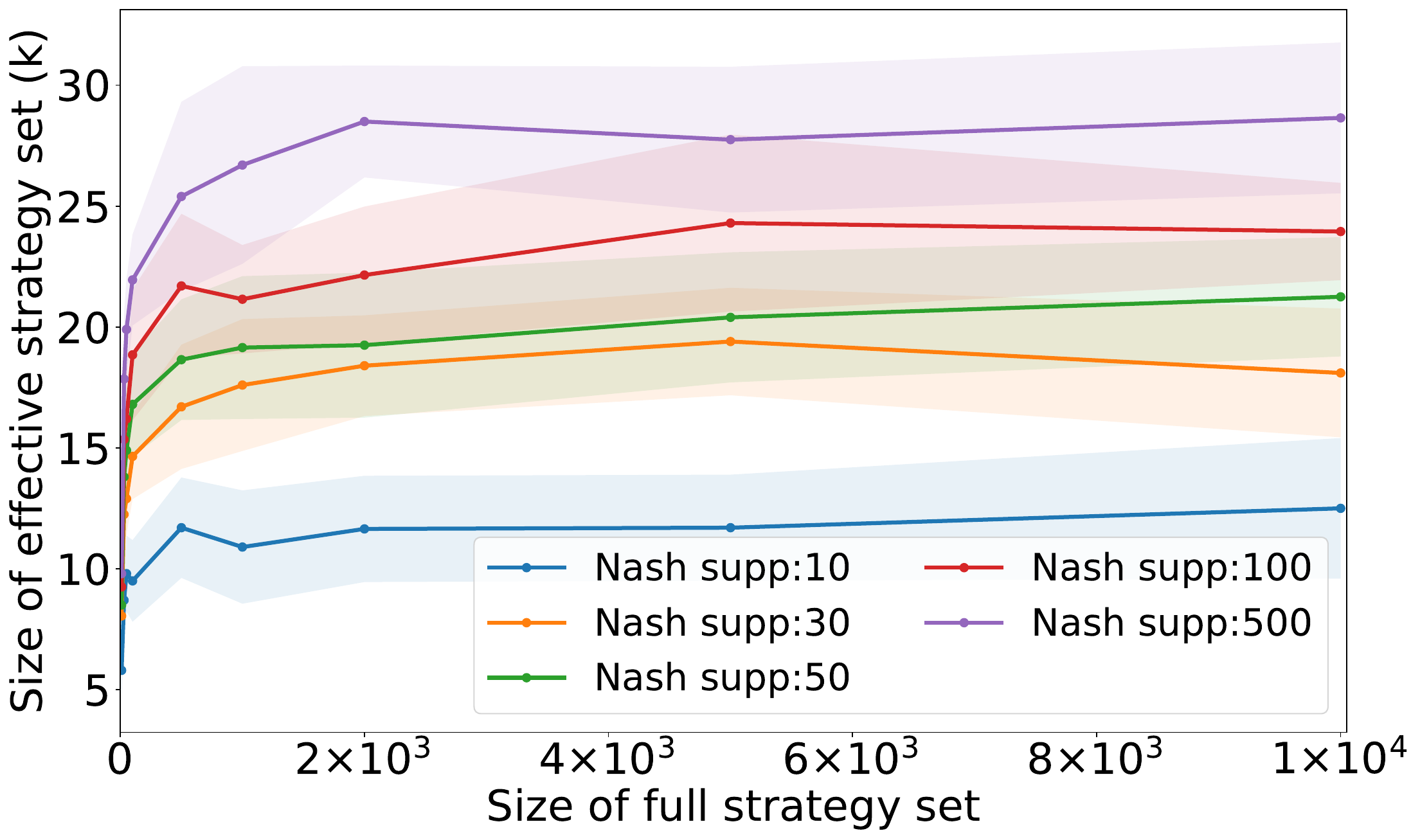}}
\vspace{-5pt}
\caption{Sizes of effective strategy set (i.e., $k$) in cases of an OSO agent playing against an MWU opponent with different  sizes of  full strategy set and NE support.} 
%\yaodong{change n-col to Nash support size}}

\label{size of k figure}
\end{center}
\vspace{-20pt}
\end{wrapfigure}
%\begin{enumerate}
%    \item What is the relationship between the size of effective strategy set $k$ and the corresponding full strategy set size?
%    \item How do our algorithms perform in real-world zero-sum games in comparison with other strong baselines?
%    \item Is the assumption that the support size of a NE in a zero-sum is often smaller than the game size general in real-world problems?
%    \item Facing large-scale problems where normal no-regret algorithms are infeasible, how do our algorithm perform compared to other feasible solutions?
%    \item Can OSO with no-regret property exploit a restricted opponent?
%\end{enumerate}

\textbf{Empirical study on the size of $k$ vs.  the full strategy set.}
We consider a set of zero-sum normal-form games of different sizes,   the entries of which are sampled from a uniform distribution $\textbf{U}(0,1)$. %, allowing us to vary the full game size. 
We run OSO as the row player against a MWU column player  until convergence, and plot the size of the OSO player's effective strategy set against its full strategy size.  
We run $20$ seeds for each  setting. 
As we can see from Figure \ref{size of k figure}, given a fixed support size of NE\footnote{We achieve this by fixing the number of columns while increasing the number of rows in the game matrix.},  the size of the effective strategy set $k$ grows as the number of full strategy set increases, but plateaus quickly. The larger the size of NE support (not the full strategy set!), the higher this plateau will reach. Clearly, we can tell that the size of OSO's  effective strategy set does not  increase with the full strategy size, but rather depends on the support size of a NE. %\footnote{By Lemma \ref{lemma bound on k}, the support size of the NE is as large as the number of columns. Thus, when we fixed the number of column, the support size of the NE is stable.}. 
This result confirms Theorem \ref{regret of online oracle with stationary distribution} in which we prove that OSO's regret bound depends on $k$ that is related to the size of NE support but  not the game size.  Economically, this is a desired property as OSO can potentially avoid unnecessary computations in contrast to other no-regret methods that require looping  over the full strategy set at each iteration.  
%\begin{figure}[t!]
%\begin{center}
%\centerline{\includegraphics[width=0.5\textwidth]{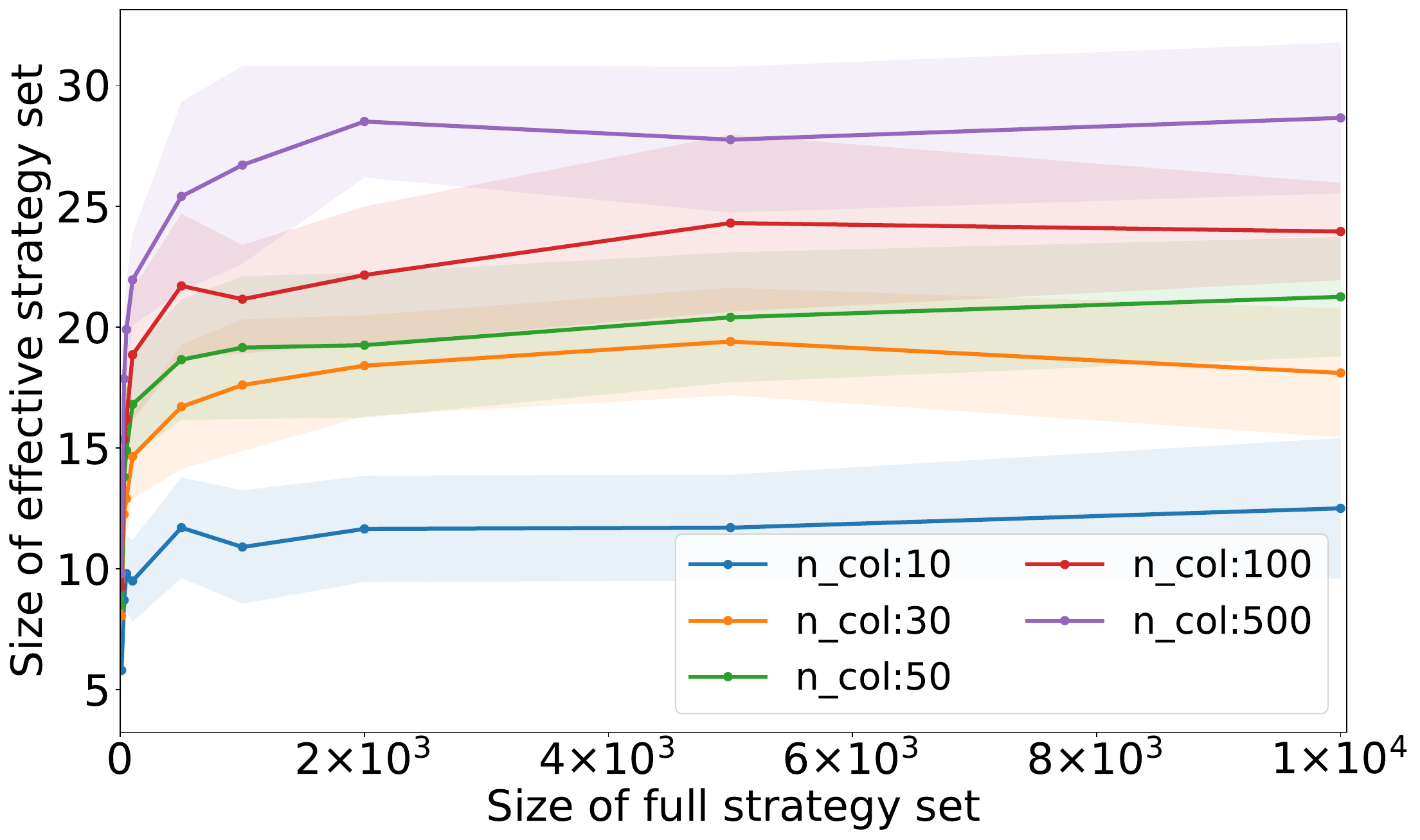}}
%\vspace{-10pt}
%\caption{Sizes of effective strategy set (i.e. the $k$) learned by OSO against MWU with different sizes of row player's full strategy set. \yaodong{try to sue the wrap-up environment to save space}}
%\label{size of k figure}
%\end{center}
%\vskip -0.1in
%\end{figure}

\begin{figure*}[t!]
     \centering
\vspace{-25pt}
\begin{subfigure}[l]{.49\textwidth}
         \centering
         \includegraphics[width=1.\textwidth]{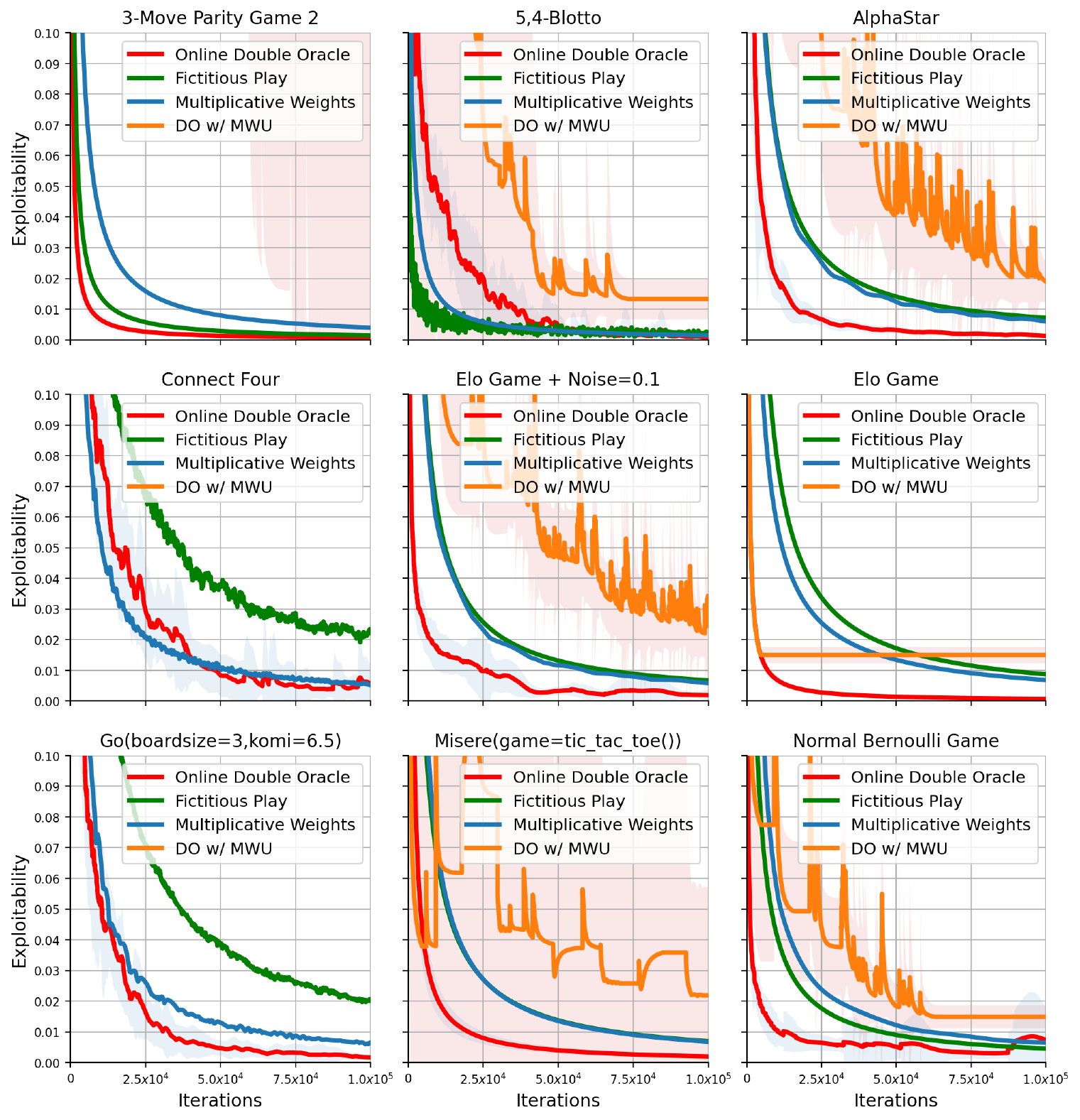}
         \vspace{-12pt}
\caption{Performance comparisons under self-plays} %\stephen{how can we compare CFR on the same graph with the x asis number of BRs computed?}}
         \label{fig:Performance of ODO against different Games}
\end{subfigure}
\begin{subfigure}[l]{.49\textwidth}
         \centering
         \includegraphics[width=1.\textwidth]{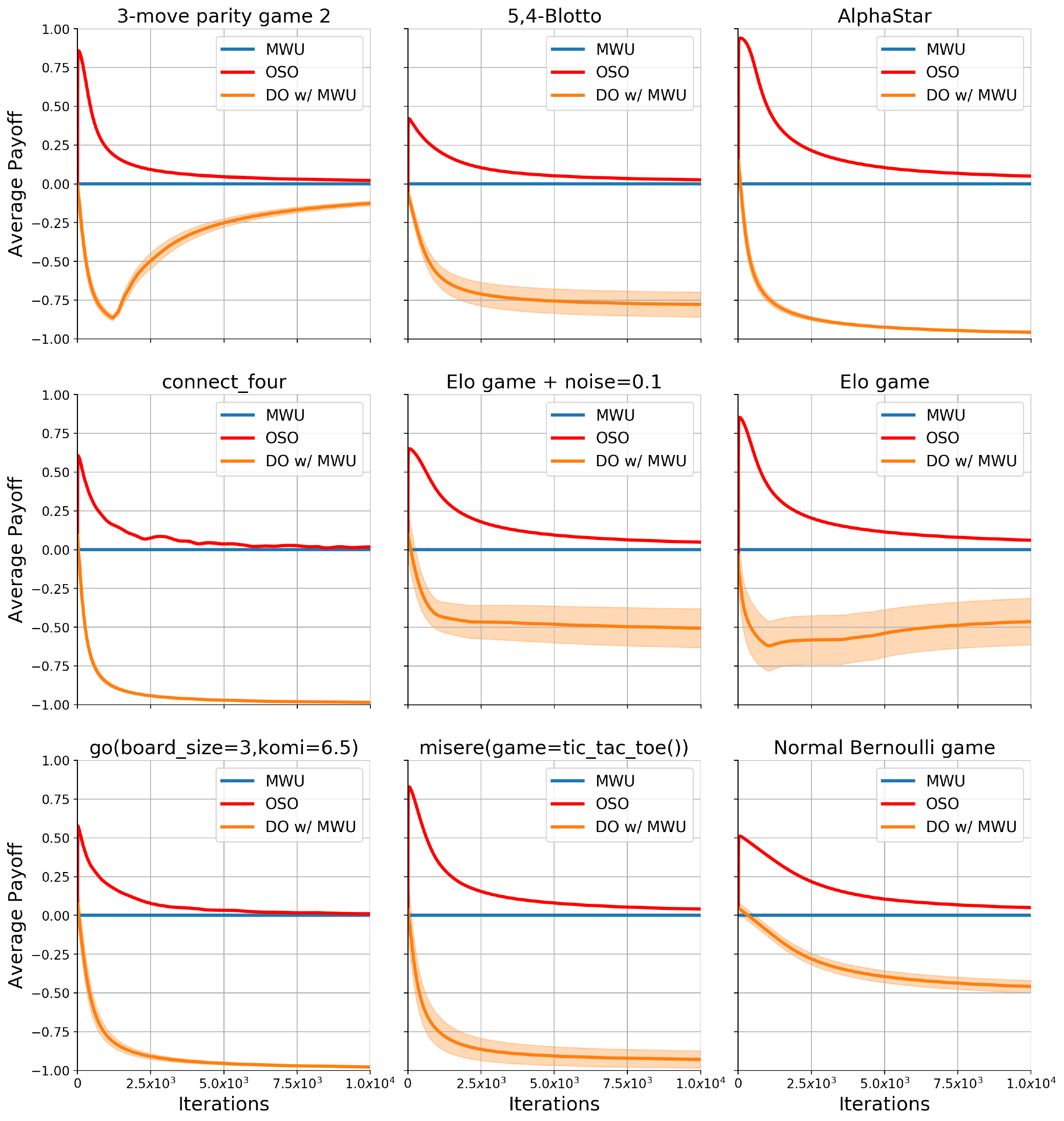}
        \vspace{-15pt}
\caption{Payoffs of playing against an MWU adversary}
       \label{fig:average_payoff_against_MWU}
\end{subfigure}
 \vspace{-5pt}
\caption{Performance comparisons on real-world games.  Full $30$ results are reported in Appendix {\color{blue}{C}}}
%($3^{936}$ pure strategies).}
  \label{fig: game of skills}
 \vspace{-10pt}
\end{figure*}

%\begin{figure*}[t!]
%\vspace{5pt}
%\begin{center}
%\centerline{\includegraphics[width=0.7\textwidth]{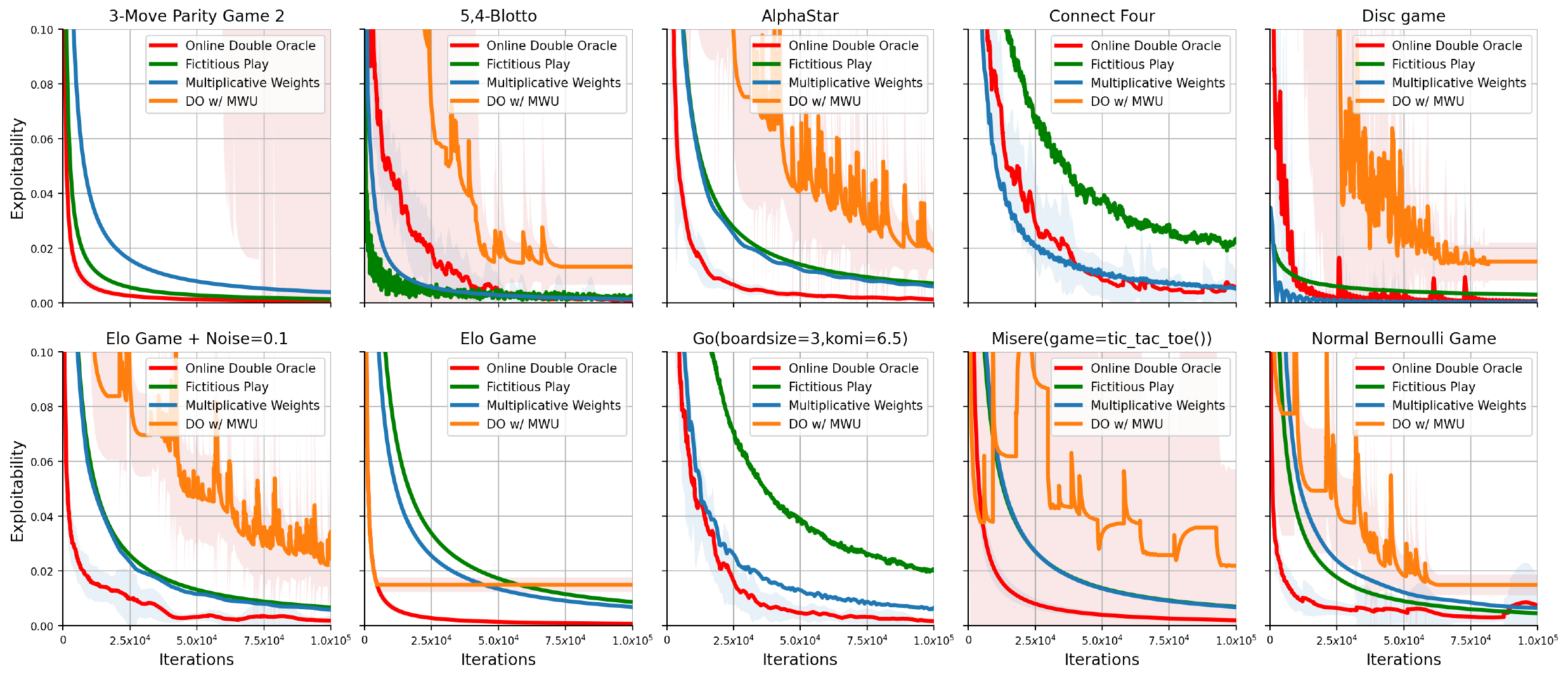}}
%\vspace{-10pt}
%\caption{Performance comparison of exploitability on Real-world Zero-Sum Games. Full results are reported in Appendix {\color{blue}{C}}.  
%\yaodong{mention why DO is not mentioned, add one ablation study} \lecong{is it to big now?}
%}
%\label{fig:Performance of ODO against different Games}
%\end{center}
%\vspace{-25pt}
%\end{figure*}

%\begin{figure*}[t!]
%\vspace{-15pt}
%\begin{center}
%\centerline{\includegraphics[width=0.7\textwidth]{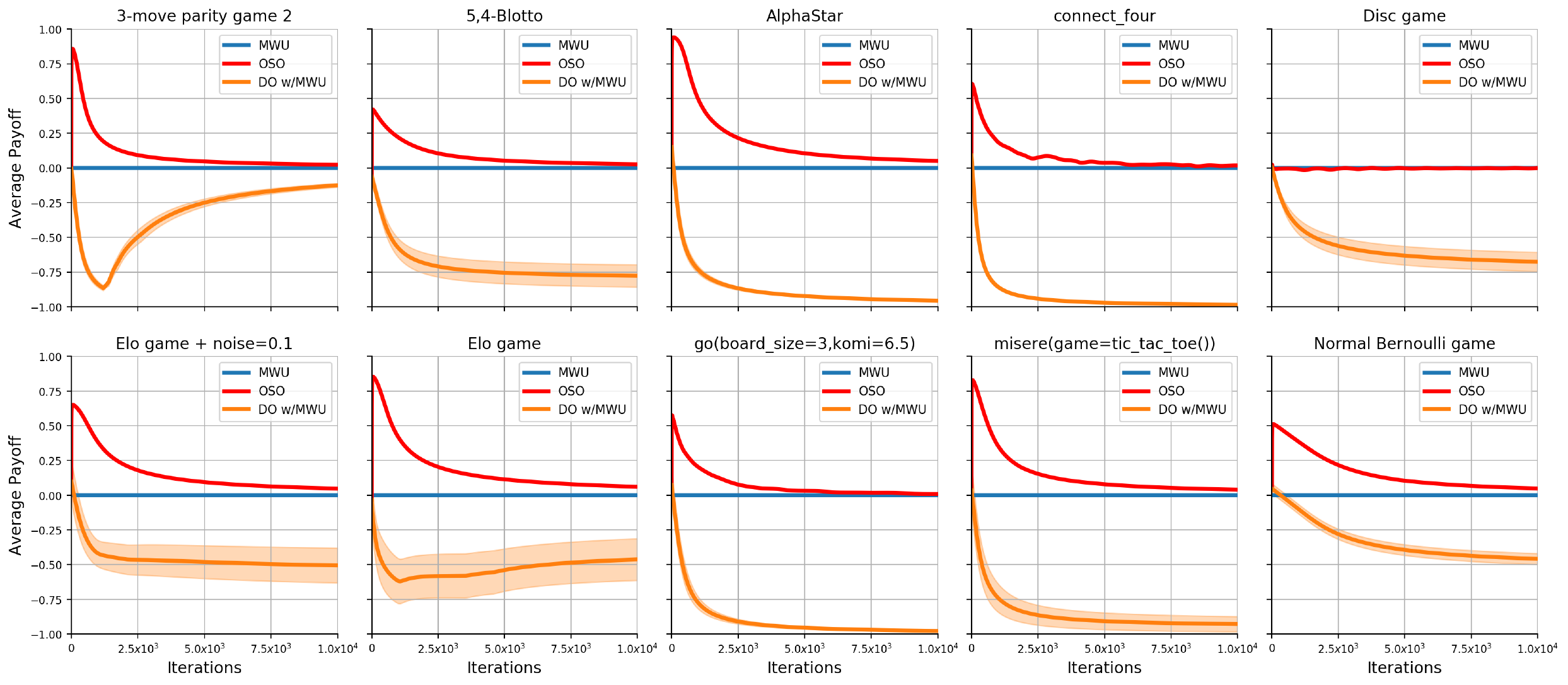}}
%\vspace{-10pt}
%\caption{Performance comparison of average payoff against an MWU adversary on Real-world Zero-Sum Games. Full results are reported in Appendix {\color{blue}{C}}.  
%\yaodong{mention why DO is not mentioned, add one ablation study} \lecong{is it to big now?}
%}
%\label{fig:average_payoff_against_MWU}
%\end{center}
%\vspace{-25pt}
%\end{figure*}
% We consider two different sets of games: asymmetric and symmetric games. In asymmetric games with randomly generated matrix $\mA$, we report the relationship between the effective strategy set and the Nash equilibrium support size. This will validate our small support size NE assumption in application such as OMDPs. In symmetric games,

\textbf{Performance on Real-World Zero-Sum Games.} We investigate ODO in terms of convergence rate to a NE. To demonstrate applicability to real-world problems, we replace the randomly generated normal-form games with $15$ popular real-world zero-sum games from Czarnecki et. al. \cite{czarnecki2020real}. We compare the \emph{exploitability} \cite{davis2014using} (i.e., the distance to a true NE) %\ollie{the metric for these games is not exploitability, it is $\epsilon$-Nash right?\zheng{yes, you are right, Lecong should we just say convergence rates or something else?}} \lecong{We can delete the exploiability, I have mentioned the relationship between Ne and exploiability in footnote 5} 
of ODO with other  baseline methods (MWU, FP and DO\footnote{Following the discussion in Section \ref{Online Double Oracle} and for fair comparisons, we implement DO by adopting MWU as  the sub-game NE solver and  report the total number of MWU iterations  DO needs to achieve  a low exploitability.}).  
%Since we do not assume any priority knowledge about the game matrix $A$, DO with LP~\cite{mcmahan2003planning} can not be applied.}
We  run each game with $20$ seeds. As it shows in Figure \ref{fig:Performance of ODO against different Games},
%\zheng{it's bit strange that we only introduce fig 2 and 3 after fig4}\lecong{changed},
ODO outperforms the baselines in almost all $15$ games. 
The advantages of ODO in terms of convergence rate over MWU and FP match our expectation as the support sizes of the NEs in these games are much smaller than the game sizes (reported in Table 2 in Appendix {\color{blue}{C}}). For DO with MWU, since it takes many iterations in each sub-game to converge to the NE, it performs much worse than ODO.

\begin{figure*}[t!]
     \centering
\vspace{-25pt}
\begin{subfigure}[l]{.49\textwidth}
         \centering
         \includegraphics[width=1.\textwidth]{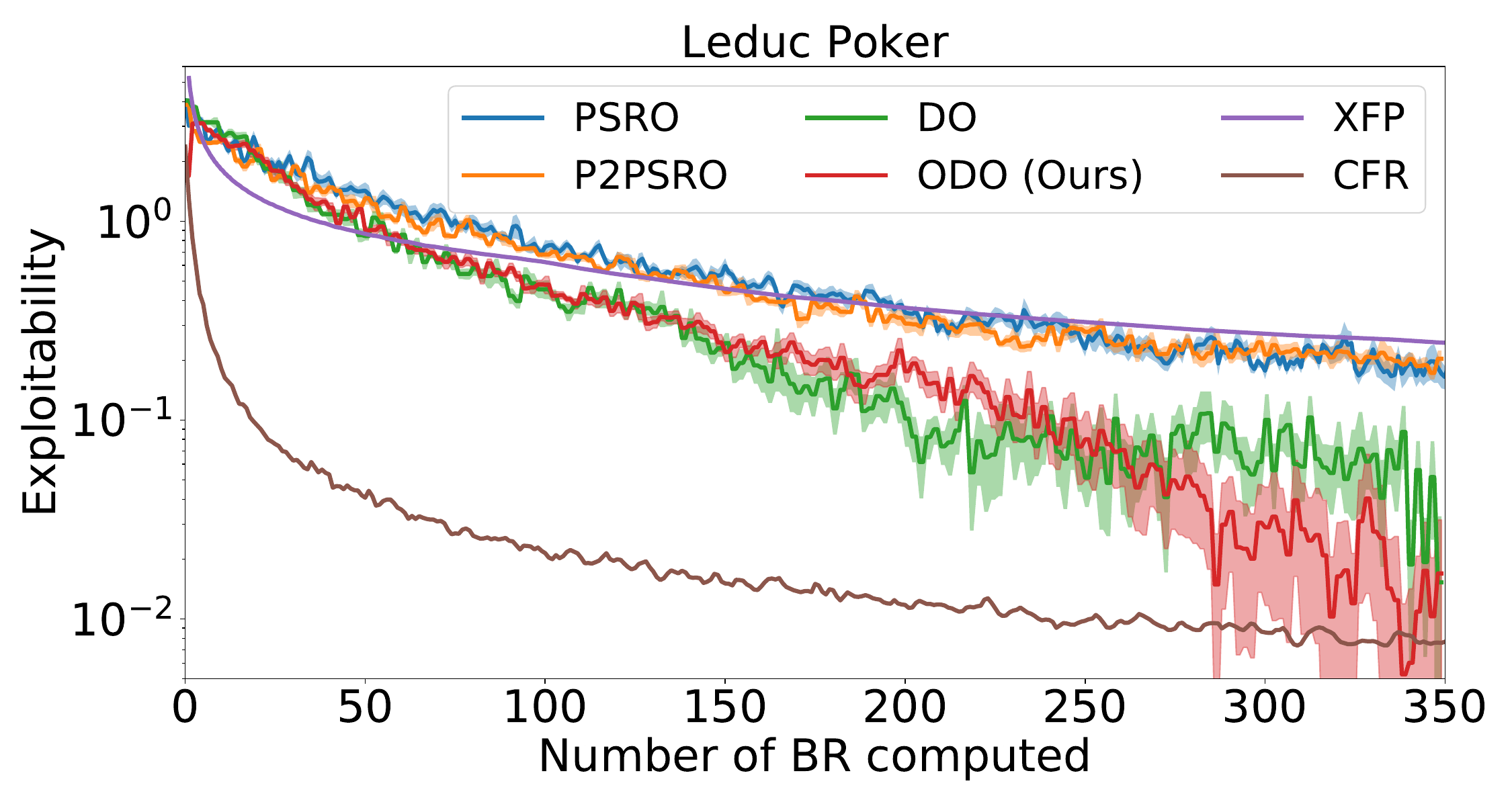}
         \vspace{-15pt}
\caption{Exploitability on Leduc Poker} %\stephen{how can we compare CFR on the same graph with the x asis number of BRs computed?}}
         \label{fig:Leduc Poker experiment in exploiability}
\end{subfigure}
\begin{subfigure}[l]{.49\textwidth}
         \centering
         \includegraphics[width=1.\textwidth]{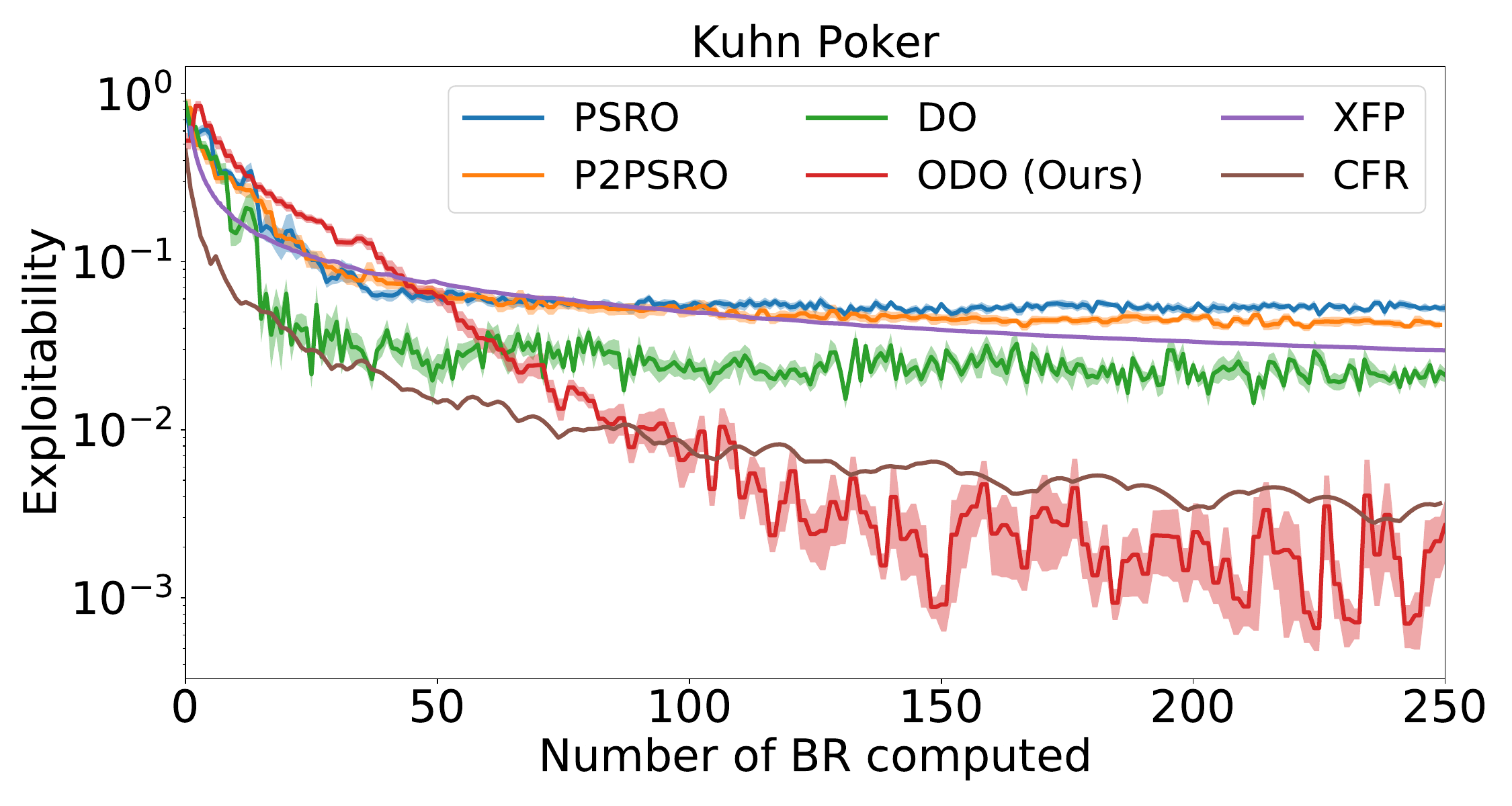}
                  \vspace{-15pt}
\caption{Exploitability on Kuhn Poker}
       \label{fig:kuhn Poker experiment in average payoff}
\end{subfigure}
 \vspace{-3pt}
\caption{Performance comparisons in exploitability on Poker games.}
%($3^{936}$ pure strategies).}
  \label{fig:exploitability}
 \vspace{-15pt}
\end{figure*}

Apart from the self-play setting, we also conduct the setting of playing against an MWU adversary in Figure \ref{fig:average_payoff_against_MWU}. 
%On average performance, we test the performance of OSO against MWU and DO when facing the same opponent (i.e., the opponent follows the MWU). As we can see in Figure...\lecong{add the figure here soon}, 
We can tell that OSO outperforms MWU and DO baselines in average performance in almost all  $15$ games, which confirms the effectiveness of our design.
Notably,  
MWU achieves a constant payoff; we believe it is because these  games are symmetric and since both players follow MWU with the same learning rate,  the payoff will always be the value of the game (thus the ground truth), which  OSO will eventually converge to as well.

\textbf{Performance on Poker Games.}
%\yaodong{rewrite this section by not saying we compare the sota on leduc, but show the effectiness of our algo on leduc, and compare against tabular version. check the rebuttal}\lecong{Now?}
%Though adapted from real games, the above $15$  games are not large enough such that normal no-regret algorithms are inapplicable. 
To further investigate ODO's effectiveness, we test  ODO on Kuhn Poker and Leduc Poker. %, which contain $2^{12}$ and $3^{936}$ pure strategies respectively.
Since ODO is designed only for normal-form games, 
 we adopt the  tabular settings \cite{mcaleer2020pipeline, lanctot2017unified} in which an exact best response is computed by a tree-traverse oracle (also see OpenSpiel \cite{lanctot2019openspiel}), and for PSRO methods, we perturb the exact best response with random noises. 
 We benchmark how many times such a  best-response oracle is called by different methods.   
 We compare against the state-of-the-art PSRO method: P2PSRO\footnote{We have discounted the fact that P2PSRO uses multiple workers (we use two) to compute best responses.  }~\cite{mcaleer2020pipeline}, and   two extensive-form game solvers CFR \cite{zinkevich2007regret}  and XFP \cite{lanctot2019openspiel}.   
 %To make fair comparisons,The MWU and FP baselines we use in previous games will not work well in these large games due to the computational hardness . Instead, we use the best response oracle algorithms in solving Leduc Poker as the baselines (i.e., Double Oracle~\cite{even2009online} \yaodong{explain},PSRO~\cite{lanctot2017unified}, Pipeline PSRO~\cite{mcaleer2020pipeline}). We note that since Kuhn Poker and Leduc Hold~'em can be efficiently represented in extensive-form, the state-of-art algorithms to solve them are counterfactual regret (CFR) base methods. We use the normal-form realization of these games only to demonstrate the scalability of ODO in large size games. We include the experiment result of CFR and Extensive Fictitious Play (XFP)~\cite{lanctot2019openspiel} for completeness of the paper.  %We also include the CFR/XFP baselines in tabular setting as mentioned in ~\cite{lanctot2019openspiel}. 
As it shows in Figure \ref{fig:exploitability}, ODO  
%~\footnote{We run our setting over 10 random seeds. We use a new initial strategy $\vpi_t= \frac{i-1}{i}{\bar{\vpi}}_{T_i}+\frac{1}{i}\va$ in step 8 of Algorithm \ref{Online Single Oracle algorithm}, where ${\bar{\vpi}}_{T_i}$ is the average strategy of in the last time window $T_i$ and $\va$ is the new best response.} 
shows a significant improvement in exploitability compared to all existing DO and PSRO baselines, and it almost catches up with the state-of-the-art solver CFR in Leduc Poker, and it outperforms CFR on Kuhn Poker.   
Importantly, ODO uses the less number of best-response calls to achieve the  lowest exploitability. 
We believe they are strongly promising results, since taking the most effective use of best-response functions has a critical   impact on solving complex games (e.g.,  StarCraft \cite{vinyals2019grandmaster}).   
%, making it less computationally expensive.
% \stephen{I don't like this paragraph. Don't we compare to CFR in figure 3 a? Also, it's surprising to me that we do better than CFR on Leduc, CFR is really hard to beat. But I do think we should include experiments that show how this approach can actually be better than CFR. One experiment that I think could work is a game I used in the XDO paper where I cloned each Leduc action say 50 times. Then CFR is really slow, but we shouldn't be because we don't need to do regret matching on the whole game, just a fraction of it.}
% \lecong{now?}

\begin{wrapfigure}{R}{0.4\textwidth}
%\begin{figure*}[H]
%\begin{figure}
\vspace{-25pt}
\begin{subfigure}[l]{0.4\textwidth}
  \begin{center}
\centerline{\includegraphics[width=1.\textwidth]{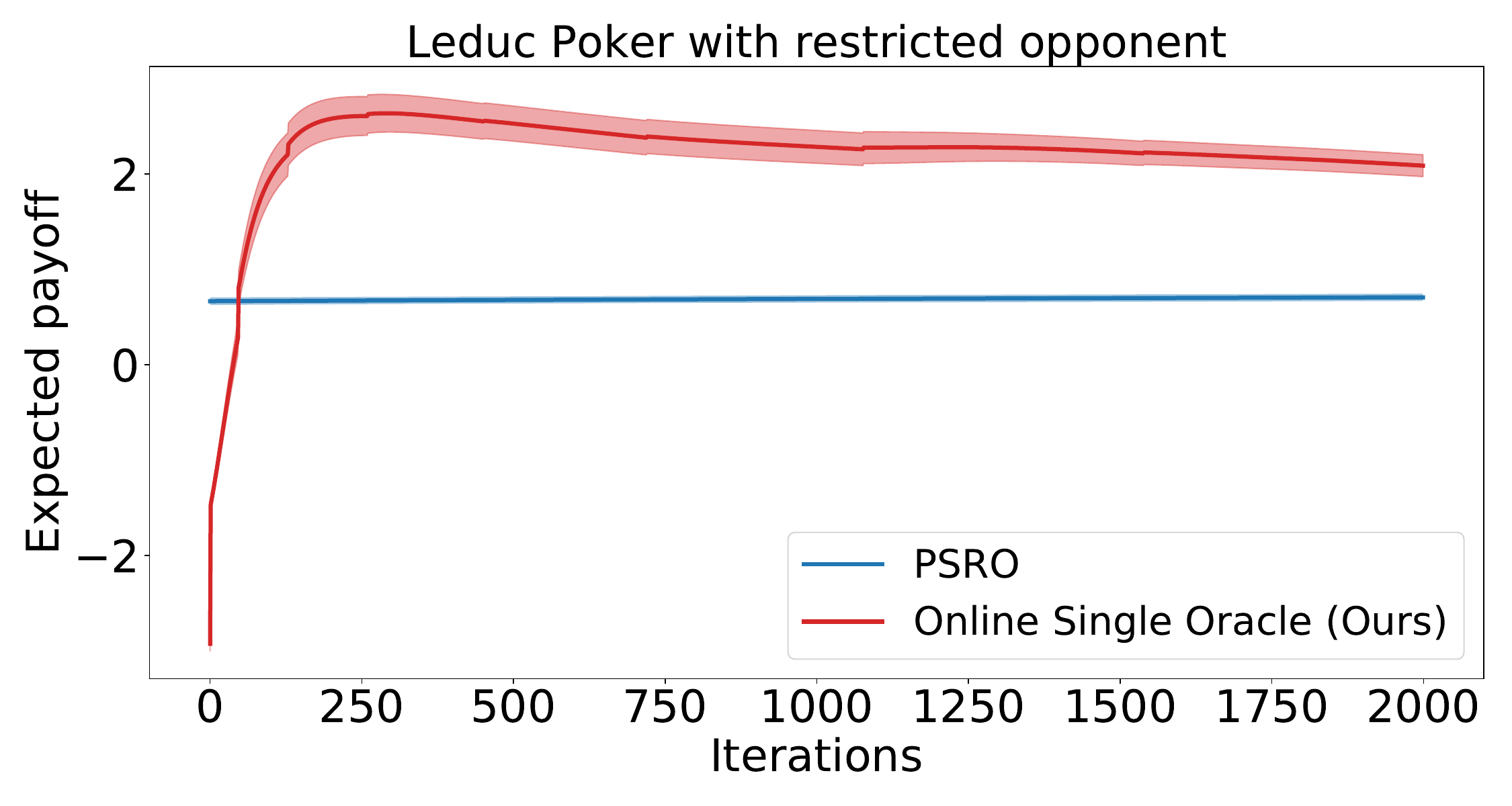}}
\vspace{-8pt}
\caption{Leduc Poker}
       \label{fig:Leduc Poker experiment in average payoff}
\end{center}
\vspace{-15pt}
\end{subfigure}
\vspace{-10pt}
\begin{subfigure}[l]{0.4\textwidth}
  \begin{center}
\centerline{\includegraphics[width=1.\textwidth]{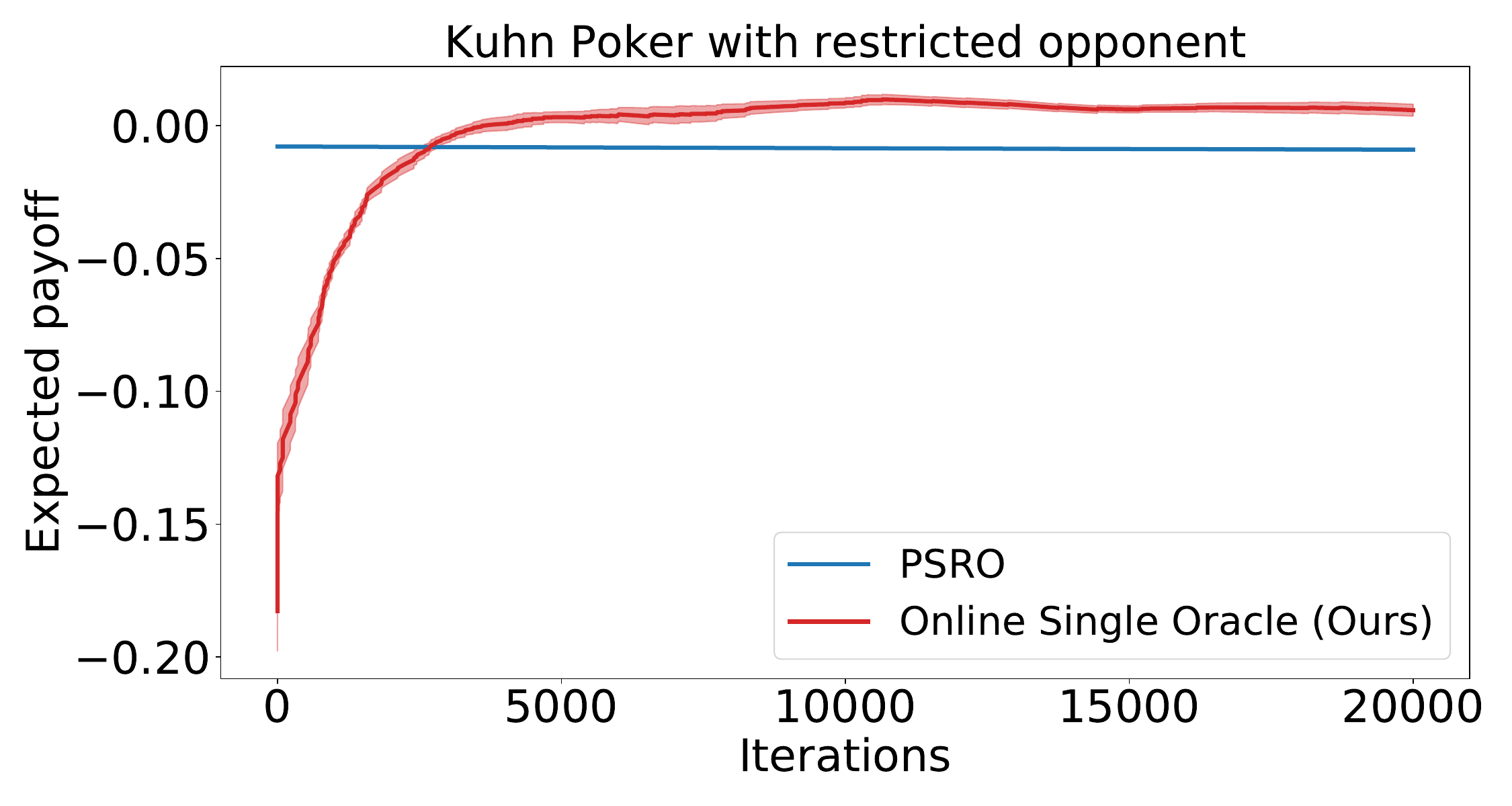}}
\vspace{-8pt}
\caption{Kuhn Poker}
       \label{fig:Kuhn Poker experiment in average payoff}
\end{center}
\end{subfigure}

\caption{Performance of playing against an imperfect opponent. } \label{fig:pokerres}
\vspace{-25pt}
%\end{figure}
\end{wrapfigure}

\textbf{Can OSO Exploit an \emph{Imperfect}  Opponent?} Finally, we want to test the no-regret property of OSO when the opponent is \emph{imperfect} and it plays a restricted set.  
%By rational, we mean a player can exploit its opponent when it's possible, e.g., when an opponent follows an algorithm that will never lead to a NE. 
We created such an opponent in both Pokers   by restricting its strategy set to only  $20$ pure strategies, and then let it play MWU. 
%For the baseline, the row player follows the $\epsilon$-NE  calculated by PSRO. 
We apply our OSO with $\epsilon$-best response as the row player, and compare its perform against PSRO. 
We run each setting for $10$ random seeds. As we can see from Figure~\ref{fig:pokerres}, 
OSO quickly achieves positive expected payoff and outperforms  PSRO, which is expected because OSO can actively exploit its opponent while PSRO behaves  conservatively by playing NE and not exploiting (thus achieving almost constant payoff).  %.equilibrium in average expected payoff. 
%; this can be explained as follow: The $\epsilon$-NE learned by PSRO will perform positively regardless of the type of the opponents, however in the case of the restricted player which cannot thoroughly learn to exploit the opponent, it will be outperformed by OSO which can exploit it accordingly. 
Notably, the average expected payoff of OSO  decreases slightly in later iterations, we believe it is because the opponent is following MWU, which is also a no-regret method, and both players will converge towards an NE of such a restricted game.  
%\yaodong{where???} 
%Since the column player follows MWU, it will learn the optimal policy with respect to the restricted pure strategy set in later iterations, thus the expected payoff of OSO decreases . 
\vspace{-10pt}
\section{Conclusion}
\vspace{-10pt}
We propose a new solver for two-player zero-sum games where the number of pure strategies $n$ is huge. % or even infinite. 
Our method, \emph{Online Double Oracle}, absorbs the benefits from both online learning and Double Oracle methods; it achieves the  regret bound of $\mathcal{O}(\sqrt{T k \log(k)})$ where $k$ only depends on the support size of NE rather than $n$. 
Unlike DO, ODO can exploit opponents during the game play. 
In tens of  real-world games, we show that ODO outperforms a series of algorithms including MWU, DO and PSRO methods  on both convergence rate to NE and average payoff against strategic adversaries.    %and Pipeline PSRO by a large margin on lowering exploitability and average payoff. 

%\yaodong{check all bib files, and don;t cite arxiv versions!}\lecong{I have changed it}
\clearpage 
\bibliographystyle{plain}
\bibliography{main}

\clearpage
\includepdf[page=-]{./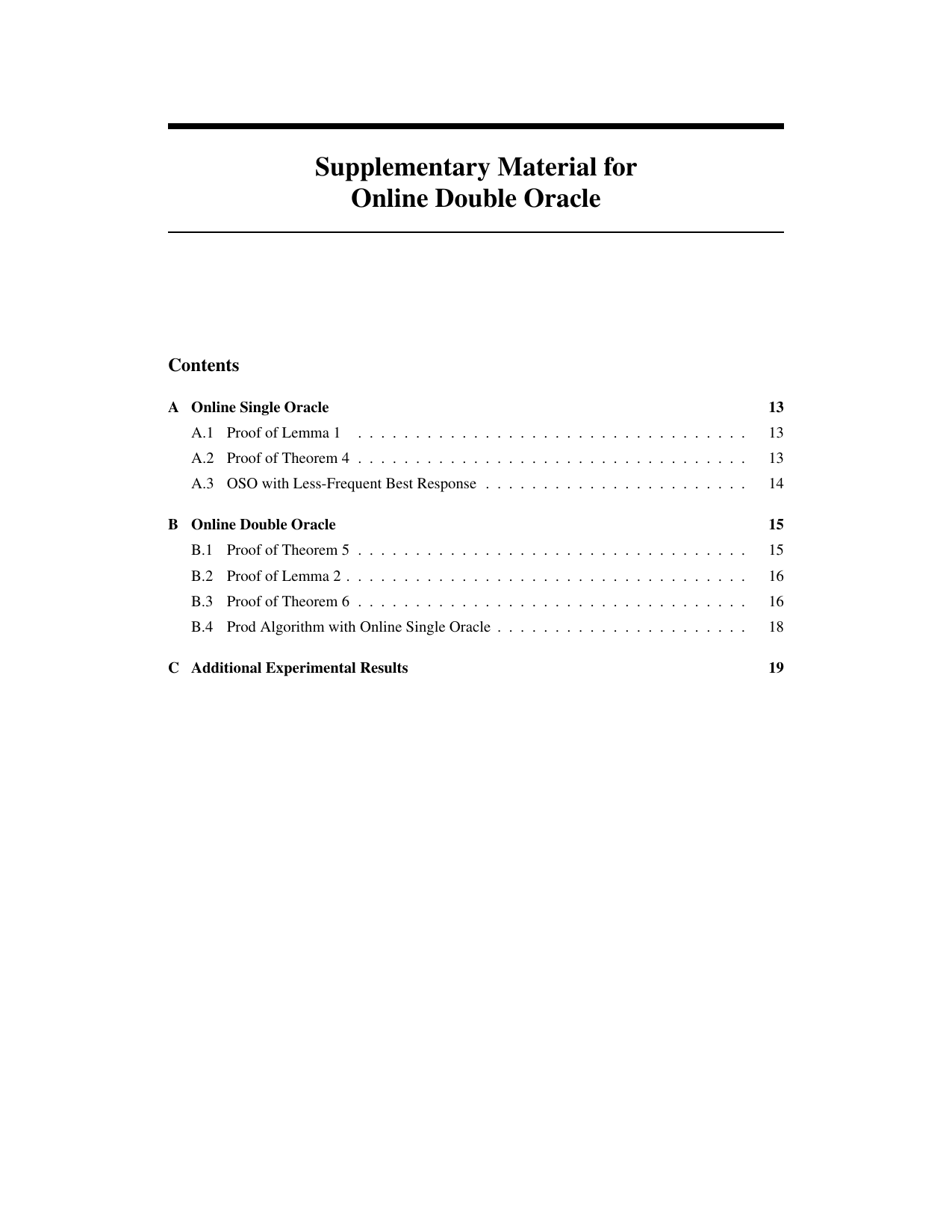}

\end{document}